\newtheorem{theorem}{Theorem}[section]
\newtheorem{lemma}[theorem]{Lemma}
\newtheorem{proposition}[theorem]{Proposition}
\newtheorem{corollary}[theorem]{Corollary}
\theoremstyle{definition}
\newtheorem{definition}[theorem]{Definition}
\newtheorem{example}[theorem]{Example}
\theoremstyle{remark}
\renewcommand{\maketitle}{\bgroup\setlength{\parindent}{0pt}
\begin{flushleft}
  \LARGE \textbf{\@title} \linebreak
  
  \normalsize \@author \linebreak
\end{flushleft}\egroup
}
\title{\Large Mathematical conjecture generation using machine intelligence}
\author[1]{Challenger Mishra}
\author[2]{Subhayan Roy Moulik}
\author[3]{Rahul Sarkar}
\affil[1]{\small The Computer Laboratory, University of Cambridge,Cambridge, CB3 0FD} 
\affil[2]{\small DAMTP, Centre for Mathematical Sciences, University of Cambridge, Cambridge, CB3 0WA} 
\affil[3]{\small Institute for Computational and Mathematical Engineering, Stanford University, Stanford, CA 94305}
\affil[ ]{}
\affil[ ]{\scriptsize cm2099@cam.ac.uk, sr2068@cam.ac.uk, rsarkar@stanford.edu}
\begin{document}

\maketitle

\begin{abstract}
Conjectures have historically played an important role in the development of pure mathematics. We propose a systematic approach to finding abstract patterns in mathematical data, in order to generate conjectures about mathematical inequalities, using machine intelligence. We focus on strict inequalities of type $f<g$ and associate them with a vector space. By geometerising this space, which we refer to as a \textit{conjecture space}, we prove that this space is isomorphic to a Banach manifold. We develop a structural understanding of this \textit{conjecture space} by studying linear automorphisms of this manifold and show that this space admits several free group actions. Based on these insights, we propose an algorithmic pipeline to generate novel conjectures using geometric gradient descent, where the metric is informed by the invariances of the \textit{conjecture space}. As proof of concept, we give a toy algorithm to generate novel conjectures about the prime counting function and diameters of Cayley graphs of non-abelian simple groups. We also report private communications with colleagues in which some conjectures were proved, and highlight that some conjectures generated using this procedure are still unproven. Finally, we propose a pipeline of mathematical discovery in this space and highlight the importance of domain expertise in this pipeline.  
\end{abstract}

\section{Introduction}

Conjectures hold a special status in mathematics. Good conjectures epitomise milestones in the pipeline of mathematical discovery, and have historically inspired new mathematics and shaped progress in theoretical physics. At the turn of the last century, David Hilbert put forward a list of 23 problems which have since driven development in geometry, number theory, and other domains of pure mathematics. 
One such conjecture is the thirteenth problem, which ultimately led to the development of a formidable representation theorem for real continuous multivariate functions in the 1950’s (due to Kolmogorov and Arnold). Not only did this posit a new vision of geometry, decades later this helped establish neurocomputing on firmer mathematical footing~\cite{H87}. Another profound example was a key observation by McKay that led to monstrous moonshine, a phenomenon connecting two seemingly disjointed parts of mathematics, namely, monster groups and modular forms. 
Hilbert's lecture at the International Congress of Mathematics in Paris, 
 exemplified the significance of \emph{identifying noteworthy problems} by making mathematical conjectures and their impact on developing new mathematics. The spirit of proposing open problems and identifying \emph{non-trivial} conjectures continues to be a rewarding and a common practice endorsed by practitioners and institutions dedicated to furthering mathematics \cite{D19}. Other notable lists of problems have been those collected by Landau \cite{Guy04}, Weil \cite{W49}, Thurston \cite{T82}, and the Clay Mathematics Institute \cite{CMI00}. 

\noindent Conjectures are unproven propositions. Formulating meaningful conjectures can be nontrivial; albeit one can be aided by  discovering new patterns and formulating well defined closed form expressions. 
 The Birch and Swinnerton-Dyer conjecture \cite{BS63,BS65}, an unsolved Millennium Prize Problem, was one of the early examples of computer assisted conjecture generation, which was proposed based on a numerical tests performed on large data in 1960s, driven by intuitions and expert insights. It is generally known that computers are canonically good at pattern recognition and processing large volumes of data algorithmically. Building on machine intelligence and domain expertise, notable recent strides were made in topics in knot theory \cite{Dothers21}, in finding formulae to equate fundamental constants as continued fractions \cite{Rothers21}, improving algorithms for matrix multiplication \cite{Fother22} and sorting \cite{Mothers23}, among a host of other applications. A common practice in these approaches has been to have interactions with domain experts for a machine guided discovery.

 \noindent Mathematical inequalities express relations of the form $f<g$.  They are ubiquitously studied across analysis, combinatorics, geometry, and so on, and are essential to bounding quantities of interest across pure and applied natural sciences. Consequently great efforts go into finding new inequalities and proving them \cite{HLP52, BB65, MV70}. 

 \noindent Bounds on prime gaps, ground state energies in physical systems, are merely a couple of examples that highlight the importance of inequalities in mathematics and natural sciences. Further, nontrivial inequalities can have a range of practical real world applications in a variety of scientific domains, as well as in interdisciplinary endeavours such as machine learning. 

 \noindent In this work, we consider the task of formulating conjectures about inequalities. In Section~\ref{sec:inequalities}, we define the notion of a \textit{conjecture space}. We develop an understanding of the structure of this space through geometerisation, and studying its linear automorphisms. We also develop an understanding of when such spaces admit quotients by free group actions. Using these insights we propose an \textit{oracle} in Section~\ref{sec:computation}, which generates conjectures by treating the problem of finding conjectures as a sampling problem. This outlines a machine-learning-guided framework to infer conjectures about  strict mathematical inequalities. The core of the oracle is geometric gradient descent, where symmetries of the underlying \textit{conjecture space} is taken into account. We make modest strides towards understanding the symmetries of the \textit{conjecture space} and pose some group theoretical questions about the underlying structure; albeit further work is required to fully exploit the advantage symmetries bring to an underlying search problem. In Section~\ref{sec:case_studies}, we utilise a basic non-geometric version of the \textit{oracle} to generate conjectures involving the prime counting function,  and diameters of Cayley graphs of non-Abelian simple groups, as proof-of-concept. Section~\ref{sec:discussion} outlines the further directions and present limitations of this approach.

\section{The structure of inequalities}\label{sec:inequalities}
\subsection{Space of inequalities}\label{sec:conj-space}

Let $\mathcal{F}$ denote the field $\mathbb{R}$, and suppose $\mathcal{D}$ and $\Theta$ are finite dimensional vector spaces over $\mathcal{F}$. We assume that $\mathcal{D}$ has dimension $n$, and $\Theta$ has dimension $m$. Let $\widehat{\mathcal{D}}\subseteq\mathcal{D}$ be a compact subset, and let $\widehat{\Theta} \subseteq \Theta$ be an open, connected subset with compact closure (will check this assumption as the paper develops). We refer to $\widehat{\mathcal{D}}$ as the \textit{feature space}, and $\widehat{\Theta}$ as the \textit{latent space}. Let $C(\widehat{\mathcal{D}})$ denote the Banach space of continuous, real valued functions on $\widehat{\mathcal{D}}$ equipped with the supremum norm, i.e. if $f \in \widehat{\mathcal{D}}$, then
\begin{equation}
    \lVert f \rVert_{C(\widehat{\mathcal{D}})} := \sup_{x \in \widehat{\mathcal{D}}} |f(x)|.
\end{equation}
We define a \textit{space of relations} $\mathcal{R} := C(\widehat{\mathcal{D}}) \times C(\widehat{\mathcal{D}})$, which is again a Banach space with the norm
\begin{equation}
    \lVert (f,g) \rVert_{\mathcal{R}} := \lVert f \rVert_{C(\widehat{\mathcal{D}})} + \lVert g \rVert_{C(\widehat{\mathcal{D}})}, \;\; (f,g) \in \mathcal{R}.
\end{equation}

\begin{definition}
\label{def:conj}
Let $(f,g) \in \mathcal{R}$. We say that the tuple $(f,g)$ is a \textit{conjecture} if and only if $f(x) < g(x)$ for all $x \in \widehat{\mathcal{D}}$. The set $\mathcal{C} := \{(f,g) \in \mathcal{R}: f(x) < g(x), \; \forall x \in \widehat{\mathcal{D}}\}$ is called the \textit{space of conjectures}. 
\end{definition}
\noindent It then immediately follows that the space of conjectures $\mathcal{C}$ is a Banach manifold (see \cite{lang2012differential} for definitions):
\begin{proposition}
\label{prop:manifold-C}
$\mathcal{C}$ is an open subset of $\mathcal{R}$ and is a Banach manifold.
\end{proposition}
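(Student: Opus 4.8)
The plan is to reduce the statement to the openness of the cone of everywhere-strictly-positive continuous functions, exploiting the compactness of $\widehat{\mathcal{D}}$ in an essential way, and then to invoke the standard fact that any open subset of a Banach space is canonically a Banach manifold.

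First I would introduce the difference map $\Phi \colon \mathcal{R} \to C(\widehat{\mathcal{D}})$, $\Phi(f,g) = g - f$. This map is linear, and it is bounded because $\lVert g - f \rVert_{C(\widehat{\mathcal{D}})} \le \lVert f \rVert_{C(\widehat{\mathcal{D}})} + \lVert g \rVert_{C(\widehat{\mathcal{D}})} = \lVert (f,g) \rVert_{\mathcal{R}}$, hence continuous. By Definition~\ref{def:conj} we have $\mathcal{C} = \Phi^{-1}(P)$, where $P := \{ h \in C(\widehat{\mathcal{D}}) : h(x) > 0 \text{ for all } x \in \widehat{\mathcal{D}} \}$ is the set of everywhere strictly positive continuous functions. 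Since the preimage of an open set under a continuous map is open, it suffices to prove that $P$ is open in $C(\widehat{\mathcal{D}})$.

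To show $P$ is open I would use compactness directly. Fix $h \in P$; because $\widehat{\mathcal{D}}$ is compact and $h$ is continuous, $h$ attains its infimum, so $\delta := \min_{x \in \widehat{\mathcal{D}}} h(x) > 0$. For any $h'$ with $\lVert h' - h \rVert_{C(\widehat{\mathcal{D}})} < \delta$ and any $x \in \widehat{\mathcal{D}}$, we get $h'(x) \ge h(x) - \lVert h' - h \rVert_{C(\widehat{\mathcal{D}})} \ge \delta - \lVert h' - h \rVert_{C(\widehat{\mathcal{D}})} > 0$, so $h' \in P$. Thus the open ball of radius $\delta$ about $h$ lies in $P$, proving $P$ open; consequently $\mathcal{C}$ is open in $\mathcal{R}$. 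The manifold claim then follows with no further work: any open subset $U$ of a Banach space $E$ is a $C^\infty$ Banach manifold modeled on $E$, with the single-chart atlas given by the inclusion $U \hookrightarrow E$ (see \cite{lang2012differential}). Applying this with $E = \mathcal{R}$ and $U = \mathcal{C}$ completes the argument.

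I expect the only genuine subtlety to be the role of compactness: it is precisely what upgrades the merely pointwise strict inequality $f < g$ into a uniform gap $\min_{x}(g(x)-f(x)) = \delta > 0$, which then supplies the radius of an open ball contained in $\mathcal{C}$. Dropping compactness of $\widehat{\mathcal{D}}$ would break this step, since a strictly positive continuous function on a noncompact domain can have infimum $0$, and then openness (and hence the Banach manifold structure) could fail. Everything else is routine continuity bookkeeping for $\Phi$ together with the cited single-chart fact.
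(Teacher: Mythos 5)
Your proof is correct, and it rests on exactly the same analytic core as the paper's: compactness of $\widehat{\mathcal{D}}$ upgrades the pointwise strict inequality $f<g$ to a uniform gap $\delta>0$, which then supplies the radius of a ball contained in the set, and the manifold claim is in both cases the standard single-chart fact for open subsets of a Banach space. The difference is purely organizational. The paper argues directly in $\mathcal{R}$: for $(f,g)\in\mathcal{C}$ it sets $\delta:=\sup_{x\in\widehat{\mathcal{D}}}\left(f(x)-g(x)\right)<0$ (attained by compactness) and shows via the triangle inequality that the ball of radius $|\delta|/2$ about $(f,g)$ lies in $\mathcal{C}$. You instead factor through the bounded linear difference map $\Phi(f,g)=g-f$ and reduce everything to openness of the positive cone $P\subseteq C(\widehat{\mathcal{D}})$, writing $\mathcal{C}=\Phi^{-1}(P)$. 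Your packaging isolates the compactness step as a standalone, reusable statement about $C(\widehat{\mathcal{D}})$ alone, replaces the two-function bookkeeping with a one-function estimate, and --- since $\lVert\Phi(f,g)\rVert_{C(\widehat{\mathcal{D}})}\le\lVert(f,g)\rVert_{\mathcal{R}}$ --- even yields the marginally larger radius $\delta$ in place of $|\delta|/2$; the paper's version avoids introducing the auxiliary map and keeps the entire estimate in a single display. These are two phrasings of one argument, and neither buys genuine extra generality; your closing remark on why compactness is essential (a strictly positive continuous function on a noncompact domain can have infimum $0$) correctly identifies the load-bearing hypothesis, which the paper uses in exactly the same way.
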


\begin{proof}
We know that $\mathcal{R}$ is a Banach manifold as it is a Banach space. We only need to show that $\mathcal{C}$ is open in $\mathcal{R}$, as it then implies that $\mathcal{C}$ is also a Banach manifold. To prove that $\mathcal{C}$ is open, let $(f,g) \in \mathcal{C}$, and define
\begin{equation}
    \delta := \sup_{x \in \widehat{\mathcal{D}}} f(x) - g(x).
\end{equation}
Since $\widehat{\mathcal{D}}$ is compact, we know that $f-g$ attains $\delta$ in $\widehat{\mathcal{D}}$, and since $f(x) < g(x)$ for all $x \in \widehat{\mathcal{D}}$, it follows that $\delta < 0$. Now consider the open set $U := \{(\bar{f}, \bar{g}) \in \mathcal{R}: \lVert (\bar{f}, \bar{g}) - (f,g) \rVert_{\mathcal{R}} < |\delta| / 2 \}$, which contains $(f,g)$. We will show that $U \subseteq \mathcal{C}$, which will prove that $\mathcal{C}$ is open. To see this, pick any $(\bar{f}, \bar{g}) \in U$, and let $x \in \widehat{\mathcal{D}}$ be arbitrary. We then have
\begin{equation}
\begin{split}
    \bar{f}(x) - \bar{g}(x)& = \bar{f}(x) - f(x) - (\bar{g}(x) - g(x)) + f(x) - g(x) \\
    & \leq |\bar{f}(x) - f(x)| + |\bar{g}(x) - g(x)| + f(x) - g(x) \\
    & \leq \lVert (\bar{f}, \bar{g}) - (f,g) \rVert_{\mathcal{R}} + \delta \leq \delta / 2 < 0.
\end{split}
\end{equation}
\end{proof}

\subsection{Group actions.}
\label{ssec:group-action}

The space of conjectures $\mathcal{C}$ enjoys certain nice geometrical properties. Let $\text{GL}(2,\mathbb{R})$ denote the set of all invertible $2 \times 2$ real valued matrices. We will denote $I := \left(\begin{smallmatrix} 1 & 0 \\ 0 & 1 \end{smallmatrix}\right)$. If $A \in \text{GL}(2,\mathbb{R})$ has the property that $A(f,g) \in \mathcal{C}$ for every $(f,g) \in \mathcal{C}$, we say that $A$ leaves $\mathcal{C}$ \textit{invariant}. In particular, we are interested in all the elements of $\text{GL}(2,\mathbb{R})$ that leave $\mathcal{C}$ invariant. For every $\left(
    \begin{smallmatrix} 
        A_{11} & A_{12} \\
        A_{21} & A_{22}
    \end{smallmatrix}
    \right) =: A \in \text{GL}(2,\mathbb{R})$, we first define the linear map
\begin{equation}
\label{eq:A-action}
\begin{split}
    & A : \mathcal{R} \rightarrow \mathcal{R}, \;\;\; (f,g) \mapsto (\bar{f}, \bar{g}), \\
    & 
    \begin{pmatrix}
        \bar{f} \\
        \bar{g}
    \end{pmatrix} := 
    \begin{pmatrix}
        A_{11} & A_{12} \\
        A_{21} & A_{22}
    \end{pmatrix} \;
    \begin{pmatrix}
        f \\
        g
    \end{pmatrix}.
\end{split}
\end{equation}
One can check that this defines a group action on $\mathcal{R}$
\begin{equation}
\label{eq:group-action}
\digamma: \text{GL}(2,\mathbb{R}) \times \mathcal{R} \rightarrow \mathcal{R}, \;\; (A, (f,g)) \mapsto A(f,g).
\end{equation}
Moreover every $A \in \text{GL}(2,\mathbb{R})$ defines a homeomorphism on $\mathcal{R}$, which we record as a proposition for later use:
\begin{proposition}
\label{prop:homeo-R}
Every $A \in \text{GL}(2,\mathbb{R})$ defines a homeomorphism $A: \mathcal{R} \rightarrow \mathcal{R}$ defined by Eq.~\eqref{eq:A-action}.
\end{proposition}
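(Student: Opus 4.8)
The plan is to show that each $A \in \text{GL}(2,\mathbb{R})$ acting on $\mathcal{R}$ via Eq.~\eqref{eq:A-action} is a continuous bijection whose inverse is also continuous. The key observation is that the map $A$ is linear, and that linearity together with a bound on the operator norm will deliver continuity almost for free; bijectivity will follow from the invertibility of the matrix $A$.

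First I would verify that $A: \mathcal{R} \to \mathcal{R}$ is continuous. Since $A$ is linear, it suffices to show it is bounded, i.e. that there exists a constant $M > 0$ with $\lVert A(f,g) \rVert_{\mathcal{R}} \leq M \lVert (f,g) \rVert_{\mathcal{R}}$ for all $(f,g) \in \mathcal{R}$. Writing $(\bar f, \bar g) = A(f,g)$, we have $\bar f = A_{11} f + A_{12} g$ and $\bar g = A_{21} f + A_{22} g$. Using the triangle inequality for $\lVert \cdot \rVert_{C(\widehat{\mathcal{D}})}$ and the definition of $\lVert \cdot \rVert_{\mathcal{R}}$, I would estimate
\begin{equation}
    \lVert A(f,g) \rVert_{\mathcal{R}} = \lVert \bar f \rVert_{C(\widehat{\mathcal{D}})} + \lVert \bar g \rVert_{C(\widehat{\mathcal{D}})} \leq M \, \big( \lVert f \rVert_{C(\widehat{\mathcal{D}})} + \lVert g \rVert_{C(\widehat{\mathcal{D}})} \big),
\end{equation}
where $M$ can be taken to be $\max_i \big( |A_{i1}| + |A_{i2}| \big)$ or any convenient bound in terms of the matrix entries. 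This establishes boundedness, hence continuity, of $A$.

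Next I would address bijectivity. Because $A \in \text{GL}(2,\mathbb{R})$ is invertible as a matrix, its inverse $A^{-1} \in \text{GL}(2,\mathbb{R})$ induces a linear map $A^{-1}: \mathcal{R} \to \mathcal{R}$ by the same formula Eq.~\eqref{eq:A-action}. One checks directly from the matrix identities $A A^{-1} = A^{-1} A = I$ that the induced maps satisfy $A \circ A^{-1} = A^{-1} \circ A = \mathrm{id}_{\mathcal{R}}$, since the action $\digamma$ of Eq.~\eqref{eq:group-action} is a genuine group action and composition of the maps corresponds to matrix multiplication. This simultaneously shows that $A$ is a bijection with inverse $A^{-1}$, and that the inverse map is itself of the form Eq.~\eqref{eq:A-action}.

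Finally, applying the boundedness argument of the first step to $A^{-1}$ (with the constant now expressed in terms of the entries of $A^{-1}$) shows that $A^{-1}$ is also continuous. Having a continuous bijection with continuous inverse, we conclude that $A$ is a homeomorphism of $\mathcal{R}$. I do not anticipate any serious obstacle here: the entire argument is a routine consequence of the finite-dimensionality of the matrix $A$ and the elementary fact that linear maps between normed spaces are continuous precisely when bounded. The only point requiring mild care is bookkeeping the operator-norm constant and confirming that composition of the induced maps faithfully tracks matrix multiplication, which is exactly the content of $\digamma$ being a group action.
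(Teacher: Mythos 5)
Your proof is correct, and it differs from the paper's in one substantive way: how continuity of the inverse is obtained. The paper proves only that $A$ is a continuous linear bijection (boundedness via the Frobenius norm, injectivity from invertibility of the matrix, surjectivity from $(f,g) = A(A^{-1}(f,g))$) and then invokes the open mapping theorem to conclude that the inverse is automatically continuous. You instead observe that $A^{-1} \in \text{GL}(2,\mathbb{R})$ induces a map of the same form via Eq.~\eqref{eq:A-action}, that the group-action property forces $A \circ A^{-1} = A^{-1} \circ A = \mathrm{id}_{\mathcal{R}}$, and that the same boundedness estimate applied to $A^{-1}$ gives continuity of the inverse directly. Your route is more elementary and self-contained --- it never uses completeness of $\mathcal{R}$, so it would survive in a merely normed setting --- at the cost of a second (trivial) boundedness check; the paper's route is shorter but leans on a nontrivial Banach-space theorem. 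One small bookkeeping slip: summing the componentwise estimates gives coefficients that are \emph{column} sums of $|A_{ij}|$, so the sharp constant is $\max_j\bigl(|A_{1j}| + |A_{2j}|\bigr)$ rather than the row-sum maximum you wrote; since you explicitly allow ``any convenient bound'' (e.g.\ $\sum_{i,j}|A_{ij}|$), this does not affect the validity of the argument.
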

\begin{proof}
We need to show that (i) $A$ is continuous and bijective, and (ii) $A$ has a continuous inverse. Since $A$ is linear, (ii) follows from (i) by the open mapping theorem; so we will only prove (i). Pick $(f,g) \in \mathcal{R}$, and let $(\bar{f}, \bar{g}) := A(f,g)$. Then $\lVert (\bar{f},\bar{g}) \rVert_{\mathcal{R}} \leq 2 \lVert A \rVert_{F} \lVert (f,g) \rVert_{\mathcal{R}}$, where $\lVert A \rVert_{F}$ is the Frobenius norm of $A$, and this shows that $A$ is a continuous map. To show injectivity of $A$, if $A(f,g) = (0,0)$, it then implies that $(f,g)=(0,0)$ as $A$ is an invertible matrix. For surjectivity, we simply note that if $(f,g) \in \mathcal{R}$, then $(f,g) = A(A^{-1}(f,g))$.
\end{proof}

\noindent However, not every $A \in \text{GL}(2,\mathbb{R})$ leaves $\mathcal{C}$ invariant, for example $A = -I$ does not leave $\mathcal{C}$ invariant, while $A = I$ does. The latter example can be improved to find two sets of non-trivial subgroups of $\text{GL}(2,\mathbb{R})$, both of which leave $\mathcal{C}$ invariant:

\begin{example}[Dilations]
\label{ex:group1}
Consider the group of positive dilations $\mathcal{T} := \{\lambda \left(\begin{smallmatrix} 1 & 0 \\ 0 & 1 \end{smallmatrix}\right): \mathbb{R} \ni \lambda > 0\}$. Then $\mathcal{T}$ is a subgroup of $\text{GL}(2,\mathbb{R})$, and if $A \in \mathcal{T}$, it is clear that $A(f,g) \in \mathcal{C}$ if and only if $(f,g) \in \mathcal{C}$.
\end{example}

\begin{example}
\label{ex:group2}
Consider the $2$-parameter set $\mathcal{H}$ defined by
\begin{equation}
\label{eq:grp2}
\mathcal{H} := \left \{ \begin{pmatrix}
    p & q-1 \\
    p-1 & q
\end{pmatrix}
, \; p,q \in \mathbb{R}, \; p+q \ne 1
\right\}.
\end{equation}
Then one may check that $\mathcal{H}$ is a subgroup of $\text{GL}(2,\mathbb{R})$, and moreover every element of $\mathcal{H}$ leaves $\mathcal{C}$ invariant. First observe that if $A := \left(\begin{smallmatrix} p & q-1 \\ p-1 & q \end{smallmatrix}\right)$, not necessarily in $\mathcal{H}$, then $1 + \det(A) = \text{tr}(A)$. Moreover if $A \in \mathcal{H}$, then $A^{-1} = \frac{1}{p+q-1} \left( \begin{smallmatrix} q & 1-q \\ 1-p & p \end{smallmatrix} \right) = \left( \begin{smallmatrix} \bar{p} & \bar{q}-1 \\ \bar{p}-1 & \bar{q} \end{smallmatrix} \right)$, with $\bar{p} = \frac{q}{p+q-1}$, and $\bar{q}=\frac{p}{p+q-1}$. Thus $\det(A^{-1}) = \text{tr}(A^{-1})-1 = \bar{p} + \bar{q} - 1 = \frac{1}{p+q-1} \neq 0$. This shows $\mathcal{H}$ is closed under inverses. Next, if $A:=  \left(\begin{smallmatrix} p & q-1 \\ p-1 & q \end{smallmatrix}\right) \in \mathcal{H}$, and $B:=  \left(\begin{smallmatrix} \bar{p} & \bar{q}-1 \\ \bar{p}-1 & \bar{q} \end{smallmatrix}\right) \in \mathcal{H}$, then $AB =  \left(\begin{smallmatrix} p' & q'-1 \\ p'-1 & q' \end{smallmatrix}\right)$, where $p' = p \bar{p} + (q-1) (\bar{p}-1)$, and $q' = q\bar{q} + (p-1)(\bar{q}-1)$, and thus $\det(AB) = \text{tr}(AB)-1=p'+q'-1$. Noting that $p'+q' - 1= (p+q-1)(\bar{p} + \bar{q} - 1) = \det(A) \det(B) \neq 0$, we can conclude that $AB \in \mathcal{H}$. This proves that $\mathcal{H}$ is a subgroup. To show that $\mathcal{H}$ leaves $\mathcal{C}$ invariant, take any $(f,g) \in \mathcal{C}$, and $A:=  \left(\begin{smallmatrix} p & q-1 \\ p-1 & q \end{smallmatrix}\right) \in \mathcal{H}$. Then $A(f,g) = (pf + qg - g, pf + qg -f) =: (\bar{f}, \bar{g})$, and thus $\bar{f} - \bar{g} = f - g$. This implies that $(\bar{f},\bar{g}) \in \mathcal{C}$.
\end{example}

\noindent The above examples beg the following question: what is the largest subgroup of $\text{GL}(2,\mathbb{R})$ that leaves $\mathcal{C}$ invariant? We now proceed to show that the two groups $\mathcal{T}$ and $\mathcal{H}$ from Examples~\ref{ex:group1} and \ref{ex:group2} suffice to generate everything --- that is if $A \in \text{GL}(2,\mathbb{R})$ leaves $\mathcal{C}$ invariant, then one has $A = BC$, where $B \in \mathcal{T}$ and $C \in \mathcal{H}$. The precise statement of this fact is Corollary~\ref{cor:group-eqiv}. The key step towards this goal is the next lemma, for which we define the set $\mathcal{G} :=  \{ A \in \text{GL}(2,\mathbb{R}) : A(f,g) \in \mathcal{C}, \; \forall (f,g) \in \mathcal{C} \}$.

\begin{lemma}
\label{lem:subgroup-G}
The set $\mathcal{G}$ satisfies the following properties:
\begin{enumerate}[(a)]
    \item Every element $A \in \mathcal{G}$ can be expressed in the form
    \begin{equation}
    \label{eq:A-decomp}
        A = 
        \begin{pmatrix}
            1 & -1 \\
            1 & 1
        \end{pmatrix}
        \begin{pmatrix}
            a & c \\
            0 & b
        \end{pmatrix}
        \begin{pmatrix}
            1 & 1\\
            -1 & 1
        \end{pmatrix}, \;\;\; a,b,c \in \mathbb{R}, \; b > 0,\; a \neq 0.
    \end{equation}
    \item $\mathcal{G}$ is a $3$-parameter subgroup of $ \text{GL}(2,\mathbb{R})$.
    \item $\mathcal{G}$ with the subspace topology inherited from $\text{GL}(2,\mathbb{R})$ is homeomorphic to the $3$-dimensional real manifold $\mathcal{M} := \{(a,b,c) \in \mathbb{R}^3 : b > 0, \; a \neq 0\}$. Thus $\mathcal{G}$ has two connected components.
    \item Each element $A \in \mathcal{G}$ defines a homeomorphism $A: \mathcal{C} \rightarrow \mathcal{C}$.
\end{enumerate}
\end{lemma}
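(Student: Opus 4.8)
The plan is to diagonalise the geometry by a linear change of coordinates that turns the conjecture condition $f<g$ into a pure positivity condition on a single coordinate, after which all four claims follow from elementary facts about upper-triangular matrices.

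First I would introduce the coordinates $(u,v):=(f+g,\,g-f)$, which is exactly the substitution implemented by the matrix $Q:=\left(\begin{smallmatrix} 1 & 1 \\ -1 & 1\end{smallmatrix}\right)$ appearing in the statement, i.e. $\left(\begin{smallmatrix} u \\ v\end{smallmatrix}\right)=Q\left(\begin{smallmatrix} f \\ g\end{smallmatrix}\right)$. Writing $P:=\left(\begin{smallmatrix} 1 & -1 \\ 1 & 1\end{smallmatrix}\right)$, one has $QP=PQ=2I$, so $Q^{-1}=\tfrac12 P$. In these coordinates $\mathcal{C}$ becomes the set $S:=\{(u,v)\in\mathcal{R}: v(x)>0\ \forall x\in\widehat{\mathcal{D}}\}$, and crucially $u$ ranges freely over all of $C(\widehat{\mathcal{D}})$ while $v$ ranges over strictly positive functions (given such $u,v$ one recovers $f=(u-v)/2$, $g=(u+v)/2$). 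For $A\in\text{GL}(2,\mathbb{R})$ the action in the new coordinates is the conjugate $\widetilde A:=QAQ^{-1}$, so $A$ leaves $\mathcal{C}$ invariant if and only if $\widetilde A$ leaves $S$ invariant.

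The heart of the argument --- and the step I expect to be the real content --- is the following characterisation: an invertible $\widetilde A=\left(\begin{smallmatrix} \alpha & \beta \\ \gamma & \delta\end{smallmatrix}\right)$ preserves $S$ if and only if $\gamma=0$ and $\delta>0$. The nontrivial direction exploits the freedom in $u$: fixing $v\equiv 1$ and letting $u\equiv t$ be a large constant of suitable sign, the second output coordinate $\gamma t+\delta$ can be driven negative unless $\gamma=0$; once $\gamma=0$, testing $v\equiv 1$ forces $\delta>0$. (That constant functions lie in $C(\widehat{\mathcal{D}})$ uses compactness, hence nonemptiness, of $\widehat{\mathcal{D}}$.) Combined with invertibility this says $\widetilde A$ is upper triangular with $\delta>0$ and $\alpha\neq 0$. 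Transporting back through $A=Q^{-1}\widetilde A Q=P(\tfrac12\widetilde A)Q$ and relabelling $(a,c,b):=\tfrac12(\alpha,\beta,\delta)$ yields exactly the decomposition of part (a); conversely any matrix of that form lies in $\mathcal{G}$, since $QP=2I$ makes $\widetilde A=2\left(\begin{smallmatrix} a & c \\ 0 & b\end{smallmatrix}\right)$ upper triangular with positive $(2,2)$ entry.

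With this exact description the remaining parts are routine. For (b), the set $\mathcal{U}$ of upper-triangular matrices with $\delta>0,\ \alpha\neq 0$ is visibly closed under products and inverses (the diagonal entries multiply resp. invert, and $\delta>0$ is preserved), so it is a subgroup; since $\mathcal{G}=Q^{-1}\mathcal{U}Q$ is its image under the automorphism ``conjugation by $Q$'', $\mathcal{G}$ is a subgroup, manifestly cut out by the three parameters $(a,b,c)$. For (c), the map $\Phi:\mathcal{M}\to\mathcal{G}$, $(a,b,c)\mapsto P\left(\begin{smallmatrix} a & c \\ 0 & b\end{smallmatrix}\right)Q$, is a continuous bijection whose inverse $A\mapsto\tfrac12\bigl((QAQ^{-1})_{11},(QAQ^{-1})_{22},(QAQ^{-1})_{12}\bigr)$ is also continuous (the entries are linear in those of $A$), so $\Phi$ is a homeomorphism; then $\mathcal{M}=\{a>0,b>0\}\sqcup\{a<0,b>0\}$ is a disjoint union of two convex, hence connected, pieces, giving the two components. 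For (d), Proposition~\ref{prop:homeo-R} already gives that $A:\mathcal{R}\to\mathcal{R}$ is a homeomorphism; since $\mathcal{G}$ is a group, $A^{-1}\in\mathcal{G}$, so both $A$ and $A^{-1}$ map $\mathcal{C}$ into $\mathcal{C}$, whence the restriction $A:\mathcal{C}\to\mathcal{C}$ is a continuous bijection with continuous inverse in the subspace topology.
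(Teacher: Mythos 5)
Your proposal is correct and follows essentially the same route as the paper: both conjugate by $\left(\begin{smallmatrix} 1 & 1 \\ -1 & 1 \end{smallmatrix}\right)$ to carry $\mathcal{C}$ onto the half-space $\{(u,v): v(x)>0 \; \forall x\}$, characterise its invariance group as the invertible upper-triangular matrices with positive $(2,2)$ entry via constant test functions, and then obtain (b)--(d) just as you do (subgroup by conjugation, explicit parametrisation homeomorphism, and restriction of the Proposition~\ref{prop:homeo-R} homeomorphism using $A^{-1}\in\mathcal{G}$). The only differences are cosmetic: the paper's test function makes the offending output coordinate exactly zero rather than negative, and in (c) it invokes a smooth-embedding result from Lee where you construct the continuous inverse $A \mapsto \tfrac12 QAQ^{-1}$ explicitly, which is if anything more elementary.
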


\begin{proof}
We first prove a result that we need for the proof of part (a). We define the set $\widetilde{\mathcal{C}} := \{(f,g) \in \mathcal{R}: g(x) > 0, \; \forall x \in \widehat{\mathcal{D}}\}$, and consider the matrix $\left( \begin{smallmatrix} 1 & 1 \\ -1 & 1 \end{smallmatrix} \right) =:B \in \text{GL}(2,\mathbb{R})$. We will show that $B$ defines a homeomorphism $B : \mathcal{C} \rightarrow \widetilde{\mathcal{C}}$. By Proposition~\ref{prop:homeo-R}, $B$ defines a homeomorphism $B: \mathcal{R} \rightarrow \mathcal{R}$, and so it suffices to show that $B(\mathcal{C}) = \widetilde{\mathcal{C}}$, as $B$ is an open map. So pick $(f,g) \in \mathcal{C}$, which implies $g(x) - f(x) > 0$ for all $x \in \widehat{\mathcal{D}}$. Then $B(f,g) = (g+f,g-f)$, and it follows that $B(\mathcal{C}) \subseteq \widetilde{\mathcal{C}}$. Next pick any $(\bar{f}, \bar{g}) \in \widetilde{\mathcal{C}}$, and define $(f,g) := \frac{1}{2}(\bar{f} - \bar{g}, \bar{f} + \bar{g})$. Then note that (i) $B(f,g) = (\bar{f},\bar{g})$, and (ii) $(f,g) \in \mathcal{C}$ as for any $x \in \widehat{\mathcal{D}}$, we have $f(x)-g(x) = -\bar{g}(x) < 0$. This shows that $\widetilde{\mathcal{C}} \subseteq B(\mathcal{C})$, and we have proved $\widetilde{\mathcal{C}} = B(\mathcal{C})$. We now return to the proof.

\noindent
(a) Choose any $A \in \mathcal{G}$, and define $\text{GL}(2,\mathbb{R}) \ni Q:= BAB^{-1}$, with $B$ defined as above. Then we claim that $Q$ leaves $\widetilde{\mathcal{C}}$ invariant, that is $Q(\bar{f}, \bar{g}) \in \widetilde{\mathcal{C}}$, for every $(\bar{f}, \bar{g}) \in \widetilde{\mathcal{C}}$. To see this, pick any $(\bar{f}, \bar{g}) \in \widetilde{\mathcal{C}}$. As $B: \mathcal{C} \rightarrow \widetilde{\mathcal{C}}$ is a homeomorphism, we have that $B^{-1}(\bar{f}, \bar{g}) \in \mathcal{C}$. Since $A \in \mathcal{G}$, we next conclude from the definition of $\mathcal{G}$ that $AB^{-1}(\bar{f}, \bar{g}) \in \mathcal{C}$. Finally this now implies that $BAB^{-1}(\bar{f}, \bar{g}) \in \widetilde{\mathcal{C}}$, which proves the claim. 

\noindent We now show that if $Q \in \text{GL}(2,\mathbb{R})$ leaves $\widetilde{\mathcal{C}}$ invariant, then $Q = \left( \begin{smallmatrix} a & c \\0 & b \end{smallmatrix} \right)$, for $a,b,c \in \mathbb{R}$, $a \neq 0$, and $b > 0$. Let us take such a matrix $Q$ and assume that $Q := \left( \begin{smallmatrix} Q_{11} & Q_{12} \\ Q_{21} & Q_{22} \end{smallmatrix} \right)$. For contradiction, suppose $Q_{21} \neq 0$. Take $\bar{g} \in C(\widehat{\mathcal{D}})$, defined as $\bar{g}(x)=1$, for all $x \in \widehat{\mathcal{D}}$. If we define $\bar{f}:= -\frac{Q_{22}}{Q_{21}} \bar{g}$, then we have (i) $(\bar{f},\bar{g}) \in \widetilde{\mathcal{C}}$, and (ii) $Q(\bar{f},\bar{g}) \not \in \widetilde{\mathcal{C}}$. This is a contradiction, and thus we must have $Q_{21} = 0$. Next, again for contradiction assume $Q_{22} \leq 0$. Take any $(\bar{f},\bar{g}) \in \widetilde{\mathcal{C}}$, and suppose $(f,g) := Q(\bar{f},\bar{g})$. Then $g = Q_{22} \bar{g}$, and thus $g(x) = Q_{22}\bar{g}(x) \leq 0$, for all $x \in \widehat{\mathcal{D}}$, which implies $(f,g) \not \in \widetilde{\mathcal{C}}$, giving the desired contradiction. Finally $Q \in \text{GL}(2,\mathbb{R})$ implies that $\det(Q) = Q_{11} Q_{22} \neq 0$, and thus we must have $Q_{11} \neq 0$.

Combining the arguments in the previous two paragraphs, we have thus proved that if $A \in \mathcal{G}$, then $A = B^{-1} Q B$, where $Q = \left( \begin{smallmatrix} a & c \\0 & b \end{smallmatrix} \right)$, for $a,b,c \in \mathbb{R}$, $a \neq 0$, and $b > 0$. The proof is finished by noting that $B^{-1} = \frac{1}{2} \left( \begin{smallmatrix} 1 & -1 \\1 & 1 \end{smallmatrix} \right)$.

\noindent
(b) Define the set $\mathcal{S} \subseteq \text{GL}(2,\mathbb{R})$ 
\begin{equation}\label{group_A}
\mathcal{S} := \left \{ \begin{pmatrix}
            1 & -1 \\
            1 & 1
        \end{pmatrix}
        \begin{pmatrix}
            a & c \\
            0 & b
        \end{pmatrix}
        \begin{pmatrix}
            1 & 1\\
            -1 & 1
        \end{pmatrix}: a,b,c \in \mathbb{R}, \; b > 0,\; a \neq 0 \right \}.
\end{equation}
It is easily checked that $\mathcal{S}$ is a subgroup of $\text{GL}(2,\mathbb{R})$, and by part (a) we have $\mathcal{G} \subseteq \mathcal{S}$. Now take $A \in \mathcal{S}$ given by Eq.~\eqref{eq:A-decomp}, and take any $(f,g) \in \mathcal{C}$. Then we have $(\bar{f}, \bar{g}) := A(f,g)$, where 
\begin{equation}
\bar{f} = (a + b - c)f + (a - b + c)g, \; \bar{g} = (a - b - c)f + (a + b + c)g.
\end{equation}
Then for any $x \in \widehat{\mathcal{D}}$, we have $\bar{f}(x) - \bar{g}(x) = 2b (f(x) - g(x)) < 0$, and thus $(\bar{f}, \bar{g}) \in \mathcal{C}$, or equivalently $A \in \mathcal{G}$. Thus we have proved $\mathcal{S} \subseteq \mathcal{G}$, and we deduce $\mathcal{S} = \mathcal{G}$. Thus $\mathcal{G}$ forms a $3$-parameter subgroup of $\text{GL}(2,\mathbb{R})$.

\noindent
(c) Define the map $\mu: \mathcal{M} \rightarrow \text{GL}(2,\mathbb{R})$, given by $\mathcal{M} \ni (a,b,c) \mapsto \left( \begin{smallmatrix} 1 & -1 \\ 1 & 1 \end{smallmatrix} \right) \left( \begin{smallmatrix} a & c \\ 0 & b \end{smallmatrix} \right) \left( \begin{smallmatrix} 1 & 1 \\ -1 & 1 \end{smallmatrix} \right)$. Then it is easily checked that $\mu$ is a smooth embedding \cite[Chapter~4]{lee2012smooth}. It then follows by \cite[Proposition~5.2]{lee2012smooth} that $\mathcal{M}$ is homeomorphic to $\mathcal{G}$ in the subspace topology. $\mathcal{M}$ has two connected components: $\{(a,b,c) \in \mathbb{R}^3: b > 0, a > 0\}$ and $\{(a,b,c) \in \mathbb{R}^3: b > 0, a < 0\}$. Thus $\mathcal{G}$ also has two connected components given by the images of the two connected components of $\mathcal{M}$ under the map $\mu$.

\noindent
(d) We just need to show $A(\mathcal{C}) = \mathcal{C}$, and then the result follows as $A: \mathcal{R} \rightarrow \mathcal{R}$ is a homeomorphism by Proposition~\ref{prop:homeo-R}. By definition of $\mathcal{G}$ we already know that $A(\mathcal{C}) \subseteq \mathcal{C}$.  For the reverse inclusion, pick any $(f,g) \in \mathcal{C}$, and we see that $(f,g) = A(A^{-1}(f,g))$. Now by part (b), $A^{-1} \in \mathcal{G}$ as $\mathcal{G}$ is a subgroup. Then $A^{-1}(f,g) \in \mathcal{C}$, and thus it follows that $\mathcal{C} \subseteq A(\mathcal{C})$.
\end{proof}

\begin{corollary}
\label{cor:group-eqiv}
If $A \in \mathcal{G}$, then one can decompose it as $A=BC$, where $B \in \mathcal{T}$ and $C \in \mathcal{H}$. Moreover, this decomposition is unique. We have $\mathcal{G} = \mathcal{T} \rtimes \mathcal{H}$, the semidirect product of $\mathcal{T}$ and $\mathcal{H}$.
\end{corollary}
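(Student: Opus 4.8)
The plan is to read the decomposition off directly from the normal form in Lemma~\ref{lem:subgroup-G}(b), organised around the single invariant that already surfaced in Example~\ref{ex:group2}: the difference $f-g$. Writing the action as left multiplication of the column $(f,g)^{\mathsf T}$ by $A$, the computation in Example~\ref{ex:group2} that $\bar f-\bar g=f-g$ for $C\in\mathcal{H}$ says precisely that the row covector $v:=(1,\,-1)$ satisfies $vC=v$; conversely, solving $vM=v$ for an invertible $M$ forces $M=\left(\begin{smallmatrix} p & q-1 \\ p-1 & q\end{smallmatrix}\right)$ with $p+q\neq 1$, so $\mathcal{H}=\{M\in\mathrm{GL}(2,\mathbb{R}):vM=v\}$. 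For a general $A\in\mathcal{G}$ I would then apply $v$ to the explicit form $A=\left(\begin{smallmatrix} a+b-c & a-b+c \\ a-b-c & a+b+c\end{smallmatrix}\right)$ of Lemma~\ref{lem:subgroup-G}(b), obtaining $vA=2b\,v$ with $2b>0$ since $b>0$. Thus every element of $\mathcal{G}$ rescales $v$ by a positive factor, and the assignment $\rho:\mathcal{G}\to\mathbb{R}_{>0}$, $\rho(A):=2b$ (the unique $\lambda$ with $vA=\lambda v$) is well defined.

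For existence I set $B:=2b\,I\in\mathcal{T}$ and $C:=B^{-1}A=(2b)^{-1}A$. Then $C\in\mathrm{GL}(2,\mathbb{R})$ and $vC=(2b)^{-1}vA=v$, so $C\in\mathcal{H}$ by the characterisation above, giving $A=BC$. For uniqueness, suppose $A=B_1C_1=B_2C_2$ with $B_i=\lambda_i I\in\mathcal{T}$ and $C_i\in\mathcal{H}$; left-multiplying by $v$ and using $vC_i=v$ gives $\lambda_i v=vA$, so $\lambda_1=\lambda_2$, whence $B_1=B_2$ and then $C_1=C_2$. Equivalently one checks $v(A_1A_2)=\rho(A_1)\rho(A_2)\,v$, so $\rho$ is a group homomorphism and the decomposition is just the factorisation of $A$ through the positive scalar $\rho(A)$.

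It remains to upgrade this to the internal semidirect product $\mathcal{G}=\mathcal{T}\rtimes\mathcal{H}$. By Examples~\ref{ex:group1} and~\ref{ex:group2}, both $\mathcal{T}$ and $\mathcal{H}$ are subgroups of $\mathcal{G}$; their intersection is trivial since $\lambda I\in\mathcal{H}$ forces $\lambda v=v$, i.e. $\lambda=1$, so $\mathcal{T}\cap\mathcal{H}=\{I\}$; and $\mathcal{T}\mathcal{H}=\mathcal{G}$ is exactly the existence statement just proved. Finally $\mathcal{T}$ is normal in $\mathcal{G}$ because it consists of scalar matrices, which are central in $\mathrm{GL}(2,\mathbb{R})$. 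These facts give the internal semidirect product. One may further note that $\mathcal{H}=\ker\rho$ is itself normal, so the product is in fact direct; I would state it as $\mathcal{T}\rtimes\mathcal{H}$ to match the claim.

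The only genuinely delicate point is recognising the correct invariant at the outset. Once one sees that $g-f$ is exactly preserved by $\mathcal{H}$ and merely positively rescaled by all of $\mathcal{G}$ — that is, that $v=(1,-1)$ is a common left eigenvector with positive eigenvalue — the homomorphism $\rho$ handles existence, uniqueness, and normality simultaneously, and every remaining verification reduces to routine $2\times 2$ linear algebra.
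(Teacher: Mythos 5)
Your proof is correct, and it takes a genuinely different route from the paper's, even though both ultimately extract the same factor $B = 2bI$. The paper argues by brute computation: it expands $A$ from Eq.~\eqref{eq:A-decomp}, pulls out $2b$ by hand, defines $p = (a+b-c)/2b$, $q = (a+b+c)/2b$, checks $p+q = 1 + a/b \neq 1$ directly, and proves uniqueness by equating the four matrix entries of $A = \bar{B}\bar{C}$; the semidirect product is then assembled from $\mathcal{G} = \langle \mathcal{T}, \mathcal{H} \rangle$, centrality of scalars, and $\mathcal{T} \cap \mathcal{H} = \{I\}$. You instead isolate the structural content: $\mathcal{H}$ is exactly the stabiliser $\{M \in \mathrm{GL}(2,\mathbb{R}) : vM = v\}$ of the covector $v = (1,-1)$ (the matrix form of the invariance of $f-g$ from Example~\ref{ex:group2}), and $\rho(A)$ defined by $vA = \rho(A)v$ is a homomorphism $\mathcal{G} \to \mathbb{R}_{>0}$ with $\rho(A) = 2b > 0$. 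This single device delivers existence ($C := \rho(A)^{-1}A$ lies in the stabiliser), uniqueness (apply $v$ to both factorisations), trivial intersection, and normality all at once, where the paper needs separate entrywise verifications. Your approach also yields a genuine refinement the paper does not state: $\mathcal{H} = \ker\rho$ is itself normal in $\mathcal{G}$, and since $\mathcal{T}$ is central the conjugation action is trivial, so the product $\mathcal{T} \rtimes \mathcal{H}$ is in fact a direct product --- the paper's $\rtimes$ notation is correct but conceals this. The trade-off is that the paper's computation is self-contained and requires no insight, whereas yours requires spotting the common left eigenvector, but rewards that with shorter verifications and more structural information.
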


\begin{proof}
Suppose $A \in \mathcal{G}$ is given by Eq.~\eqref{eq:A-decomp}, for some $a, b, c \in \mathbb{R}$, $b > 0$, and $a \neq 0$. Expanding we get
\begin{equation}
    A = 
    \begin{pmatrix}
        a + b - c  & a-b+c \\
        a - b -c & a+b+c
    \end{pmatrix}= 2b
    \begin{pmatrix}
        p & q-1 \\ p-1 & q
    \end{pmatrix},
\end{equation}
where $p = (a+b-c)/2b$, and $q  = (a+b+c)/2b$. Here $p+q = 1 + a/b \neq 1$. This proves the required decomposition $A=BC$, with the choices $B = 2b \left( \begin{smallmatrix} 1 & 0 \\ 0 & 1 \end{smallmatrix} \right)$, and $C = \left( \begin{smallmatrix} p & q-1 \\ p-1 & q \end{smallmatrix} \right)$. Uniqueness of the decomposition can be proved as follows: if $A = \bar{B} \bar{C}$ is another decomposition with $\bar{B} := \lambda \left( \begin{smallmatrix} 1 & 0 \\ 0 & 1 \end{smallmatrix} \right) \in \mathcal{T}$, and $\bar{C}:= \left( \begin{smallmatrix} \bar{p} & \bar{q}-1 \\ \bar{p}-1 & \bar{q} \end{smallmatrix} \right) \in \mathcal{H}$, then equating the four matrix entries gives us $\lambda = 2b$, $p=\bar{p}$, and $q = \bar{q}$.

\noindent From the definition of $\mathcal{G}$ above, it is clear that $\mathcal{T}, \mathcal{H} \subseteq \mathcal{G}$. Since $\mathcal{G}$ is a group by Lemma~\ref{lem:subgroup-G}(b), this implies $\langle \mathcal{T}, \mathcal{H} \rangle \subseteq \mathcal{G}$. Also, by the first part of this corollary, we have $\mathcal{G} \subseteq \langle \mathcal{T}, \mathcal{H} \rangle$. This proves $\mathcal{G} = \langle \mathcal{T}, \mathcal{H} \rangle$. Since $\mathcal{T}$ is a normal subgroup of $\mathcal{G}$, and $\mathcal{T}\cap \mathcal{H}=\{{\left( \begin{smallmatrix} 1 & 0 \\ 0 & 1 \end{smallmatrix} \right)}\}$, we can conclude that $\mathcal{G}=\mathcal{T}\rtimes\mathcal{H}$.
\end{proof}

\noindent Because of Lemma~\ref{lem:subgroup-G}, the group action $\digamma$ given by Eq.~\eqref{eq:group-action} restricts to the subgroup $\mathcal{G}$ acting on the subset $\mathcal{C}$, and we obtain a group action $\digamma_{\mathcal{C}}$ on $\mathcal{C}$
\begin{equation}
\label{eq:group-action-C}
\digamma_{\mathcal{C}}: \mathcal{G} \times \mathcal{C} \rightarrow \mathcal{C}, \;\; (A, (f,g)) \mapsto A(f,g).
\end{equation}
Note that the group action $\digamma_{\mathcal{C}}$ is \textit{fixed point free}: the set $\{(f,g) \in \mathcal{C}: A(f,g)=(f,g), \; \forall A \in \mathcal{G}\}$ is empty. To see this, suppose for contradiction that there exists $(f,g) \in \mathcal{C}$ such that $A(f,g)=(f,g)$ for all $A \in \mathcal{G}$. In particular, this implies that for every $\lambda > 0$, we must have $(\lambda f, \lambda g) = (f,g)$, from which it follows that $f=g=0$. However $(0,0) \not \in \mathcal{C}$ giving the desired contradiction.

\subsection{Free group actions.}
\label{ssec:free-group-action}

Our next goal is to find the non-trivial subgroups of $\mathcal{G}$ that act freely on $\mathcal{C}$. Recall the definition of free group action:
\begin{definition}
\label{def:free-action}
Given a group $G$ with identity element $e$, and a set $X$, a group action $\mu: G \times X \rightarrow X$ is said to act \textit{freely} on $X$ if for all $x \in X$, the condition $\mu(g,x)=x$ implies $g=e$. 
\end{definition}

\noindent It is easy to see that there indeed exist subgroups of $\mathcal{G}$ that do not act freely on $\mathcal{C}$. One such subgroup is $\mathcal{H}$ that we already encountered. To see this, take $A := \left( \begin{smallmatrix} 1 & q-1 \\ 0 & q \end{smallmatrix} \right) \in \mathcal{H}$. Then for any $(f,0) \in \mathcal{C}$, we see that $A(f,0)=(f,0)$. Since $A \neq I$, we conclude that $\mathcal{H}$ does not act freely on $\mathcal{C}$. But other subgroups of $\mathcal{G}$ indeed exist that act freely on $\mathcal{C}$, and we will now show that there are at least two such non-trivial subgroups $\mathcal{J}_1$ and $\mathcal{J}_2$, which are defined later in Eq.~\eqref{eq:J1-J2-set}. To do this, we start by proving the following lemma:

\begin{lemma}
\label{lem:free-action-helper}
Let $A \in \mathcal{G}$ be given by $A=\left( \begin{smallmatrix} 1 & -1 \\ 1 & 1 \end{smallmatrix} \right) \left( \begin{smallmatrix} a & c \\ 0 & b \end{smallmatrix} \right) \left( \begin{smallmatrix} 1 & 1 \\ -1 & 1 \end{smallmatrix} \right)$, for $a,b,c \in \mathbb{R}$, $a \neq 0$, and $b > 0$, and suppose $A \neq I$. Then there exists $(f,g) \in \mathcal{C}$ such that $A(f,g)=(f,g)$, if and only if $2b=1$ and $2a \neq 1$.
\end{lemma}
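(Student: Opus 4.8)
The plan is to conjugate the fixed-point problem into a triangular one, where the two equations decouple. Recall from the proof of Lemma~\ref{lem:subgroup-G}(a) that $B := \left(\begin{smallmatrix} 1 & 1 \\ -1 & 1 \end{smallmatrix}\right)$ defines a homeomorphism $B : \mathcal{C} \to \widetilde{\mathcal{C}}$, where $\widetilde{\mathcal{C}} = \{(\tilde f, \tilde g) \in \mathcal{R} : \tilde g(x) > 0 \ \forall x \in \widehat{\mathcal{D}}\}$. Writing $P := \left(\begin{smallmatrix} 1 & -1 \\ 1 & 1 \end{smallmatrix}\right)$ and $Q := \left(\begin{smallmatrix} a & c \\ 0 & b \end{smallmatrix}\right)$, a direct computation gives $PB = 2I$, hence $P = 2B^{-1}$, so the decomposition in the hypothesis reads $A = PQB = 2B^{-1}QB$, equivalently $BAB^{-1} = 2Q$. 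First I would use this to transport the problem: for $(f,g) \in \mathcal{C}$, setting $(\tilde f, \tilde g) := B(f,g) \in \widetilde{\mathcal{C}}$ and applying $B$ to both sides, the equation $A(f,g) = (f,g)$ becomes $2Q(\tilde f, \tilde g) = (\tilde f, \tilde g)$. Since $B$ is a bijection $\mathcal{C} \to \widetilde{\mathcal{C}}$, exhibiting a fixed point of $A$ in $\mathcal{C}$ is the same as exhibiting a fixed point of $2Q$ in $\widetilde{\mathcal{C}}$.

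Next I would read off the decoupled equations. Because $2Q = \left(\begin{smallmatrix} 2a & 2c \\ 0 & 2b \end{smallmatrix}\right)$ is upper triangular, the condition $2Q(\tilde f, \tilde g) = (\tilde f, \tilde g)$ is the pair of pointwise identities $2a\,\tilde f + 2c\,\tilde g = \tilde f$ and $2b\,\tilde g = \tilde g$. The second reads $(2b-1)\tilde g \equiv 0$; since every $(\tilde f, \tilde g) \in \widetilde{\mathcal{C}}$ has $\tilde g$ strictly positive, this forces $2b = 1$, giving necessity of that condition.

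For the remaining necessity, assume a fixed point exists, so $2b = 1$, and suppose towards a contradiction that $2a = 1$. Then the first identity collapses to $2c\,\tilde g \equiv 0$, and positivity of $\tilde g$ forces $c = 0$; but then $a = b = \tfrac12$ and $c = 0$ give $Q = \tfrac12 I$ and $A = 2B^{-1}(\tfrac12 I)B = I$, contradicting $A \neq I$. Hence $2a \neq 1$, which completes the ``only if'' direction. For the converse, assume $2b = 1$ and $2a \neq 1$; I would simply produce a fixed point from constant functions: take $\tilde g \equiv 1$ and $\tilde f \equiv \tfrac{2c}{1-2a}$. Both lie in $C(\widehat{\mathcal{D}})$, the pair lies in $\widetilde{\mathcal{C}}$ since $\tilde g \equiv 1 > 0$, and a one-line substitution confirms both triangular identities, so $(f,g) := B^{-1}(\tilde f, \tilde g) \in \mathcal{C}$ is the desired fixed point of $A$.

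The only genuinely delicate point is bookkeeping the scalar: it is $A = 2B^{-1}QB$, not $B^{-1}QB$, so the relevant operator on $\widetilde{\mathcal{C}}$ is $2Q$ rather than $Q$, which is exactly why the thresholds surface as $2b = 1$ and $2a = 1$ rather than $b = 1$ and $a = 1$. Everything else is a routine decoupled linear computation once conjugation by $B$ has reduced $\mathcal{C}$ to the half-space $\widetilde{\mathcal{C}}$.
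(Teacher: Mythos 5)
Your proposal is correct and follows essentially the same route as the paper: the paper's ``upon simplification'' step is precisely your conjugation $BAB^{-1} = 2Q$ applied to $(g+f,\,g-f) = B(f,g)$, yielding the same two decoupled identities, and your converse construction with $\tilde g \equiv 1$, $\tilde f \equiv \tfrac{2c}{1-2a}$ pulled back through $B^{-1}$ is identical to the paper's. Your explicit bookkeeping of the factor $2$ (via $PB = 2I$) is a clean way of presenting what the paper leaves implicit.
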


\begin{proof}
First note that $A=I$ if and only if $a = b = \frac{1}{2}$, and $c=0$. So we will assume that this is not the case in this proof. Now suppose $(f,g) \in \mathcal{C}$ such that $A(f,g)=(f,g)$. Then upon simplification, we have $\left( \begin{smallmatrix} 2a & 2c \\ 0 & 2b \end{smallmatrix} \right) (g+f, g-f) = (g+f, g-f)$. This gives us two equalities: (i) $(2a-1) (g+f) + 2c(g - f) = 0$, and (ii) $(2b-1)(g-f) = 0$. Since $(f,g) \in \mathcal{C}$, we have $g(x) - f(x) > 0$, for all $x \in \widehat{\mathcal{D}}$. Then condition (ii) implies $2b=1$. Now for contradiction if we assume $2a=1$, then condition (i) implies that $2c(g-f)=0$, or equivalently $c=0$. But this implies $A = I$, which is not allowed.

\noindent For the converse direction, assume $2b=1$ and $2a \neq 1$. Take $\bar{g} \in C(\widehat{\mathcal{D}})$ defined by $\bar{g}(x)=1$, for all $x \in \widehat{\mathcal{D}}$. Also define $\bar{f}:= \frac{2c}{1-2a} \bar{g}$, and $(f,g) := \left( \begin{smallmatrix} 1 & 1 \\ -1 & 1 \end{smallmatrix} \right)^{-1} (\bar{f}, \bar{g}) = \frac{1}{2}(\bar{f} - \bar{g}, \bar{f} + \bar{g})$. Then it is clear that $(f,g) \in \mathcal{C}$, as $g(x) - f(x) = \bar{g}(x) = 1$, for all $x \in \widehat{\mathcal{D}}$. Additionally we have
\begin{equation}
\begin{split}
A(f,g) &= \left( \begin{smallmatrix} 1 & -1 \\ 1 & 1 \end{smallmatrix} \right) \left( \begin{smallmatrix} a & c \\ 0 & b \end{smallmatrix} \right) \left( \begin{smallmatrix} 1 & 1 \\ -1 & 1 \end{smallmatrix} \right)(f,g) = \left( \begin{smallmatrix} 1 & -1 \\ 1 & 1 \end{smallmatrix} \right) \left( \begin{smallmatrix} a & c \\ 0 & b \end{smallmatrix} \right) (\bar{f},\bar{g}) = \tfrac{1}{2} \left( \begin{smallmatrix} 1 & -1 \\ 1 & 1 \end{smallmatrix} \right) \left( \begin{smallmatrix} 2a & 2c \\ 0 & 2b \end{smallmatrix} \right) (\bar{f},\bar{g}) \\
&= \tfrac{1}{2} \left( \begin{smallmatrix} 1 & -1 \\ 1 & 1 \end{smallmatrix} \right) \left( \begin{smallmatrix} 2a & 2c \\ 0 & 1 \end{smallmatrix} \right) (\bar{f},\bar{g}) = \tfrac{1}{2} \left( \begin{smallmatrix} 1 & -1 \\ 1 & 1 \end{smallmatrix} \right) (\bar{f},\bar{g}) = (f,g),
\end{split}
\end{equation}
which finishes the proof.
\end{proof}

Next define two $2$-parameter subsets $\mathcal{J}_1, \mathcal{J}_2 \subseteq \mathcal{G}$
\begin{equation}
\label{eq:J1-J2-set}
\begin{split}
\mathcal{J}_1 &:= \left \{ \begin{pmatrix}
            1 & -1 \\
            1 & 1
        \end{pmatrix}
        \begin{pmatrix}
            b & c \\
            0 & b
        \end{pmatrix}
        \begin{pmatrix}
            1 & 1\\
            -1 & 1
        \end{pmatrix}: b,c \in \mathbb{R}, b > 0 \right \} \\
\mathcal{J}_2 &:= \left \{ \begin{pmatrix}
            1 & -1 \\
            1 & 1
        \end{pmatrix}
        \begin{pmatrix}
            \frac{1}{2} & c \\
            0 & b
        \end{pmatrix}
        \begin{pmatrix}
            1 & 1\\
            -1 & 1
        \end{pmatrix}: b,c \in \mathbb{R}, b > 0 \right \},
\end{split}
\end{equation}
and it is easily checked that both $\mathcal{J}_1$ and $\mathcal{J}_2$ are subgroups of $\mathcal{G}$. The following properties can also be easily verified: (i) $\mathcal{J}_1 \cap \mathcal{J}_2 = \left \{ \tfrac{1}{2}\left( \begin{smallmatrix} 1 & -1 \\ 1 & 1 \end{smallmatrix} \right) \left( \begin{smallmatrix} 1 & c \\ 0 & 1 \end{smallmatrix} \right)   \left( \begin{smallmatrix} 1 & 1 \\ -1 & 1 \end{smallmatrix} \right): c \in \mathbb{R} \right \}$, (ii) $\mathcal{T} \cap \mathcal{J}_2 = \{I\}$, and (iii) $\mathcal{T} \subseteq \mathcal{J}_1$. We can now prove the main lemma of this section:

\begin{lemma}
\label{lem:free-acting-subgroups}
The subgroups $\mathcal{J}_1$ and $\mathcal{J}_2$ act freely on $\mathcal{C}$. Moreover these subgroups are maximal in the following sense:
\vspace*{-0.25cm}
\begin{enumerate}[(i)]
    \item There does not exist a subgroup $\widehat{\mathcal{G}} \neq \mathcal{J}_1$ such that $\mathcal{J}_1 \subseteq \widehat{\mathcal{G}} \subseteq \mathcal{G}$, with $\widehat{\mathcal{G}}$ acting freely on $\mathcal{C}$.
    \item There does not exist a subgroup $\widehat{\mathcal{G}} \neq \mathcal{J}_2$ such that $\mathcal{J}_2 \subseteq \widehat{\mathcal{G}} \subseteq \mathcal{G}$, with $\widehat{\mathcal{G}}$ acting freely on $\mathcal{C}$.
\end{enumerate}
\end{lemma}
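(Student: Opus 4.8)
The plan is to transfer the entire question, via the conjugation already used in the proof of Lemma~\ref{lem:subgroup-G}, to the group of upper triangular matrices, where both the group multiplication and the fixed-point criterion of Lemma~\ref{lem:free-action-helper} become transparent. Write $B := \left(\begin{smallmatrix} 1 & 1 \\ -1 & 1 \end{smallmatrix}\right)$, and for $A \in \mathcal{G}$ given by Eq.~\eqref{eq:A-decomp} set $T(A) := B A B^{-1} = \left(\begin{smallmatrix} 2a & 2c \\ 0 & 2b \end{smallmatrix}\right)$. Since $A \mapsto B A B^{-1}$ is a group isomorphism, $\mathcal{G}$ is isomorphic to the group $\mathcal{U} := \{ \left(\begin{smallmatrix} \alpha & \gamma \\ 0 & \beta \end{smallmatrix}\right): \beta > 0,\ \alpha \neq 0 \}$, and under this isomorphism $\mathcal{J}_1$ corresponds to the slice $\{\alpha = \beta\}$ and $\mathcal{J}_2$ to the slice $\{\alpha = 1\}$. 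Moreover Lemma~\ref{lem:free-action-helper} says precisely that a non-identity element of $\mathcal{G}$ has a fixed point in $\mathcal{C}$ if and only if its image in $\mathcal{U}$ lies on the locus $\{\beta = 1,\ \alpha \neq 1\}$; I would call such elements \emph{degenerate}. Thus a subgroup acts freely on $\mathcal{C}$ if and only if its image in $\mathcal{U}$ contains no degenerate element, and the whole lemma becomes a statement about where subgroups of $\mathcal{U}$ meet this locus.

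With this dictionary the freeness of $\mathcal{J}_1$ and $\mathcal{J}_2$ is immediate: an element of $\{\alpha = \beta\}$ with $\beta = 1$ automatically has $\alpha = 1$, and every element of $\{\alpha = 1\}$ has $\alpha = 1$, so neither slice meets $\{\beta = 1,\ \alpha \neq 1\}$. The real content is the two maximality assertions, which I would prove by contradiction in the $\mathcal{U}$ picture, using the same normalising idea in both cases.

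For (i), suppose $\mathcal{J}_1 \subsetneq \widehat{\mathcal{G}} \subseteq \mathcal{G}$ and pick $A \in \widehat{\mathcal{G}} \setminus \mathcal{J}_1$ with image $T = \left(\begin{smallmatrix} \alpha_0 & \gamma_0 \\ 0 & \beta_0 \end{smallmatrix}\right)$; lying outside the slice $\{\alpha=\beta\}$ is exactly the condition $\alpha_0 \neq \beta_0$. Because $\mathcal{T} \subseteq \mathcal{J}_1 \subseteq \widehat{\mathcal{G}}$, the scalar $\tfrac{1}{\beta_0} I$ (with $\tfrac{1}{\beta_0} > 0$) lies in the image of $\widehat{\mathcal{G}}$, and the product $\tfrac{1}{\beta_0} T = \left(\begin{smallmatrix} \alpha_0/\beta_0 & \gamma_0/\beta_0 \\ 0 & 1 \end{smallmatrix}\right)$ has $\beta = 1$ and $\alpha = \alpha_0/\beta_0 \neq 1$, hence is degenerate, contradicting freeness. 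For (ii) I would pick $A \in \widehat{\mathcal{G}} \setminus \mathcal{J}_2$ with image $\left(\begin{smallmatrix} \alpha_0 & \gamma_0 \\ 0 & \beta_0 \end{smallmatrix}\right)$ and $\alpha_0 \neq 1$, then multiply on the right by the diagonal matrix $\left(\begin{smallmatrix} 1 & 0 \\ 0 & 1/\beta_0 \end{smallmatrix}\right)$, which sits in the image of $\mathcal{J}_2$; the product $\left(\begin{smallmatrix} \alpha_0 & \gamma_0/\beta_0 \\ 0 & 1 \end{smallmatrix}\right)$ again has $\beta = 1$ and $\alpha = \alpha_0 \neq 1$, so it is degenerate.

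The step I expect to require the most care is precisely this maximality direction. The delicate point is not the arithmetic but the bookkeeping: one must check that the normalising factor ($\tfrac{1}{\beta_0} I$ in case (i), $\mathrm{diag}(1, 1/\beta_0)$ in case (ii)) genuinely lies in $\widehat{\mathcal{G}}$ and not merely in $\mathcal{G}$, which is where the inclusions $\mathcal{T} \subseteq \mathcal{J}_1$ and $\{\alpha=1\text{ diagonals}\} \subseteq \mathcal{J}_2$ get used, so that the degenerate product is an honest element of $\widehat{\mathcal{G}}$. One should also note explicitly that the defining inequality of being outside the slice ($\alpha_0 \neq \beta_0$, respectively $\alpha_0 \neq 1$) is exactly what keeps the normalised element on $\{\alpha \neq 1\}$, i.e.\ degenerate rather than equal to $I$; this is what closes each contradiction and shows the two normalisations do different work in the two cases.
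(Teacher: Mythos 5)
Your proposal is correct and takes essentially the same route as the paper: both hinge on Lemma~\ref{lem:free-action-helper} as the fixed-point criterion, get freeness of $\mathcal{J}_1$ and $\mathcal{J}_2$ immediately from it, and prove maximality by multiplying an element outside the subgroup by a suitable subgroup element so that the product has $b$-parameter $\tfrac{1}{2}$ (your $\beta=1$) and $a$-parameter $\neq\tfrac{1}{2}$ (your $\alpha\neq1$). Your conjugation into the upper-triangular group is only a cosmetic repackaging of the paper's $(a,b,c)$ parameterisation --- your normalisers $\tfrac{1}{\beta_0}I$ and $\mathrm{diag}(1,1/\beta_0)$ are exactly the paper's choices $\widetilde{A}$ with middle matrices $\left(\begin{smallmatrix}1/4b & c\\ 0 & 1/4b\end{smallmatrix}\right)$ and $\left(\begin{smallmatrix}1/2 & c\\ 0 & 1/4b\end{smallmatrix}\right)$ up to the immaterial off-diagonal entry.
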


\begin{proof}
To prove that $\mathcal{J}_1$ (resp. $\mathcal{J}_2$) acts freely on $\mathcal{C}$, we pick any $(f,g) \in \mathcal{C}$, and suppose that $A(f,g)=(f,g)$, for some $A \in \mathcal{J}_1$ (resp. $A \in \mathcal{J}_2$). By Lemma~\ref{lem:free-action-helper}, we conclude that $A=I$. We now prove the maximality conditions:

\noindent
(i) For contradiction assume such a $\widehat{\mathcal{G}}$ exists, and pick $A := \left( \begin{smallmatrix} 1 & -1 \\ 1 & 1 \end{smallmatrix} \right) \left( \begin{smallmatrix} a & c \\ 0 & b \end{smallmatrix} \right) \left( \begin{smallmatrix} 1 & 1 \\ -1 & 1 \end{smallmatrix} \right) \in \widehat{\mathcal{G}} \setminus \mathcal{J}_1$. Then we have $a \neq b$. Now choose $\widetilde{A} := \left( \begin{smallmatrix} 1 & -1 \\ 1 & 1 \end{smallmatrix} \right) \left( \begin{smallmatrix} 1/4b & c \\ 0 & 1/4b \end{smallmatrix} \right) \left( \begin{smallmatrix} 1 & 1 \\ -1 & 1 \end{smallmatrix} \right) \in \mathcal{J}_1$. Since $\widehat{\mathcal{G}}$ is a subgroup, $A \widetilde{A} \in \widehat{\mathcal{G}}$. But we also have $A \widetilde{A} = \left( \begin{smallmatrix} 1 & -1 \\ 1 & 1 \end{smallmatrix} \right) \left( \begin{smallmatrix} a/2b & \tilde{c} \\ 0 & 1/2 \end{smallmatrix} \right)
\left( \begin{smallmatrix} 1 & 1 \\ -1 & 1 \end{smallmatrix} \right)
$, where $\tilde{c} = 2ac + c/2b$. Now using $a \neq b$ we obtain $a/2b \neq \frac{1}{2}$, and so by applying Lemma~\ref{lem:free-action-helper}, we can conclude that there exists $(f,g) \in \mathcal{C}$ such that $A \widetilde{A}(f,g) = (f,g)$. Thus $\widehat{\mathcal{G}}$ does not act freely on $\mathcal{C}$, giving a contradiction.

\noindent
(ii) Again for contradiction assume such a $\widehat{\mathcal{G}}$ exists, and pick $A := \left( \begin{smallmatrix} 1 & -1 \\ 1 & 1 \end{smallmatrix} \right) \left( \begin{smallmatrix} a & c \\ 0 & b \end{smallmatrix} \right) \left( \begin{smallmatrix} 1 & 1 \\ -1 & 1 \end{smallmatrix} \right) \in \widehat{\mathcal{G}} \setminus \mathcal{J}_2$. Then we have $a \neq \frac{1}{2}$. Now choose $\widetilde{A} := \left( \begin{smallmatrix} 1 & -1 \\ 1 & 1 \end{smallmatrix} \right) \left( \begin{smallmatrix} 1/2 & c \\ 0 & 1/4b \end{smallmatrix} \right) \left( \begin{smallmatrix} 1 & 1 \\ -1 & 1 \end{smallmatrix} \right) \in \mathcal{J}_2$. Since $\widehat{\mathcal{G}}$ is a subgroup, $A \widetilde{A} \in \widehat{\mathcal{G}}$, and a simple computation also shows $A \widetilde{A} = \left( \begin{smallmatrix} 1 & -1 \\ 1 & 1 \end{smallmatrix} \right) \left( \begin{smallmatrix} a & \tilde{c} \\ 0 & 1/2 \end{smallmatrix} \right)
\left( \begin{smallmatrix} 1 & 1 \\ -1 & 1 \end{smallmatrix} \right)
$, where $\tilde{c}$ is the same as in (i). Since $a \neq \frac{1}{2}$, we can again conclude by Lemma~\ref{lem:free-action-helper}, that $\widehat{\mathcal{G}}$ does not act freely on $\mathcal{C}$, giving a contradiction.
\end{proof}

\subsection{Visualizing the various groups.}
\label{ssec:group-viz}

The group $\mathcal{G}$ is a three-parameter group defined in Eq.~\eqref{group_A}. The parameters~$(a,b,c)$ lie in $\mathbb{R}^3$, and although $c$ is unrestricted, $a\ne0$ and $b>0$. Define   $p(a,b):=1+\delta(a)-\text{sign}(b)$, where $\delta(x) = 1$ if $x = 1$, and zero otherwise, and $\text{sign}(x) = x / |x|$ if $x \neq 0$, and $0$ if $x=0$. It is easy to show that $p(a,b)$ vanishes only when $a\ne0$ and $b>0$. By considering smooth approximations of $\delta(a)$ and $\text{sign}(b)$, one can then visualise the group elements as those $(a,b)$ for which $p(a,b)$ vanishes. We show this in Figure~\ref{fig:group_elements}. Further, all subgroups we have noted here can be thought of as topological subspaces in this three-dimensional space: $(\mathbb{R}\setminus\{0\})\times \mathbb{R}^+ \times \mathbb{R}$. A pictorial summary of the various groups in this work is presented in Figure~\ref{fig:groups}.

\begin{figure}[t!]
    \centering
    \includegraphics[trim={0cm 4cm 0cm 5cm},clip,width=0.8\linewidth]{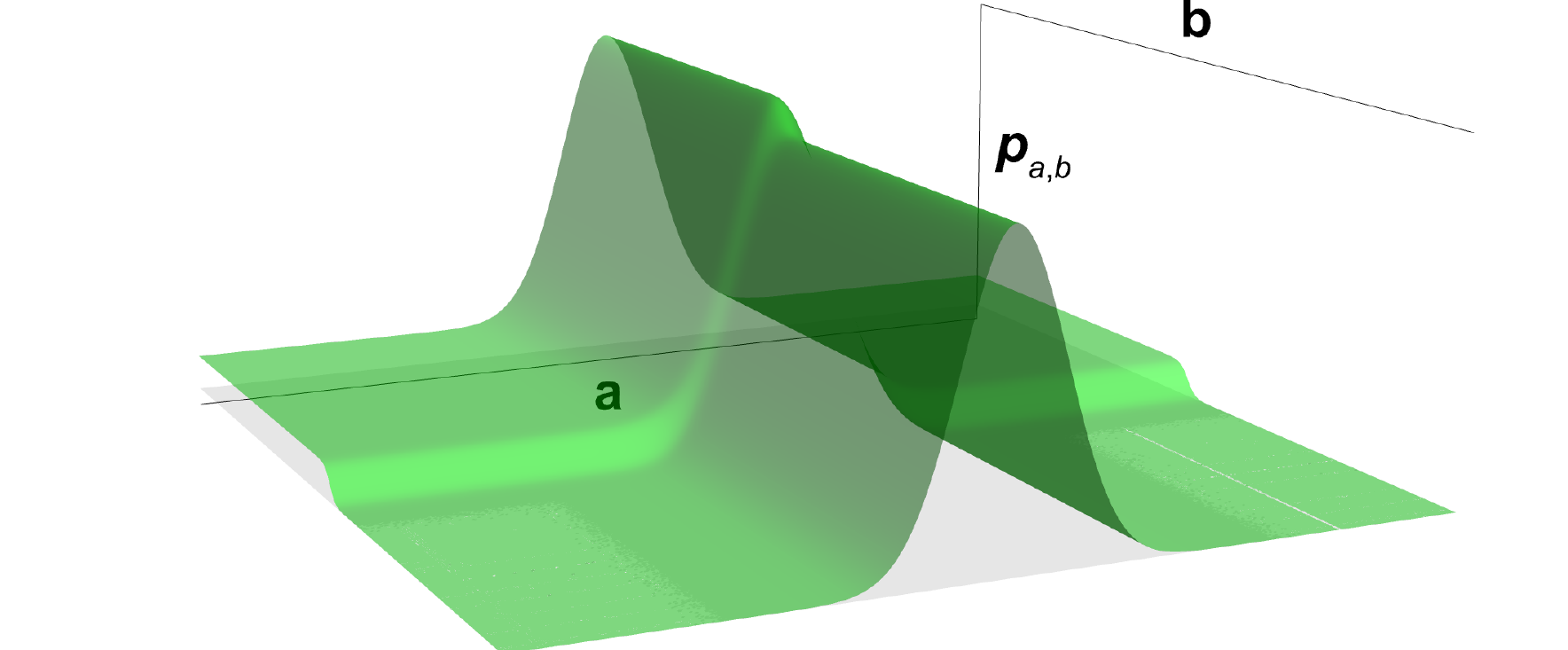}
    \caption{A visualisation of a smooth approximation of  $p(a,b):=1+\delta(a)-\text{sign}(b)$. Considering $(a,b)\in\mathbb{R}^2$ (fixing the parameter $c$), the group elements of $\mathcal{G}$ can be understood as zeros of the function $p(a,b)$ for a given $c$. When the smooth approximation approaches $p(a,b)$, the conditions $a\ne0$ and $b>0$ that define $\mathcal{G}$ are met strictly.}%
    \label{fig:group_elements}
\end{figure}

\begin{figure}[t!]
    \hspace{100pt}
{{\includegraphics[width=0.7\linewidth]{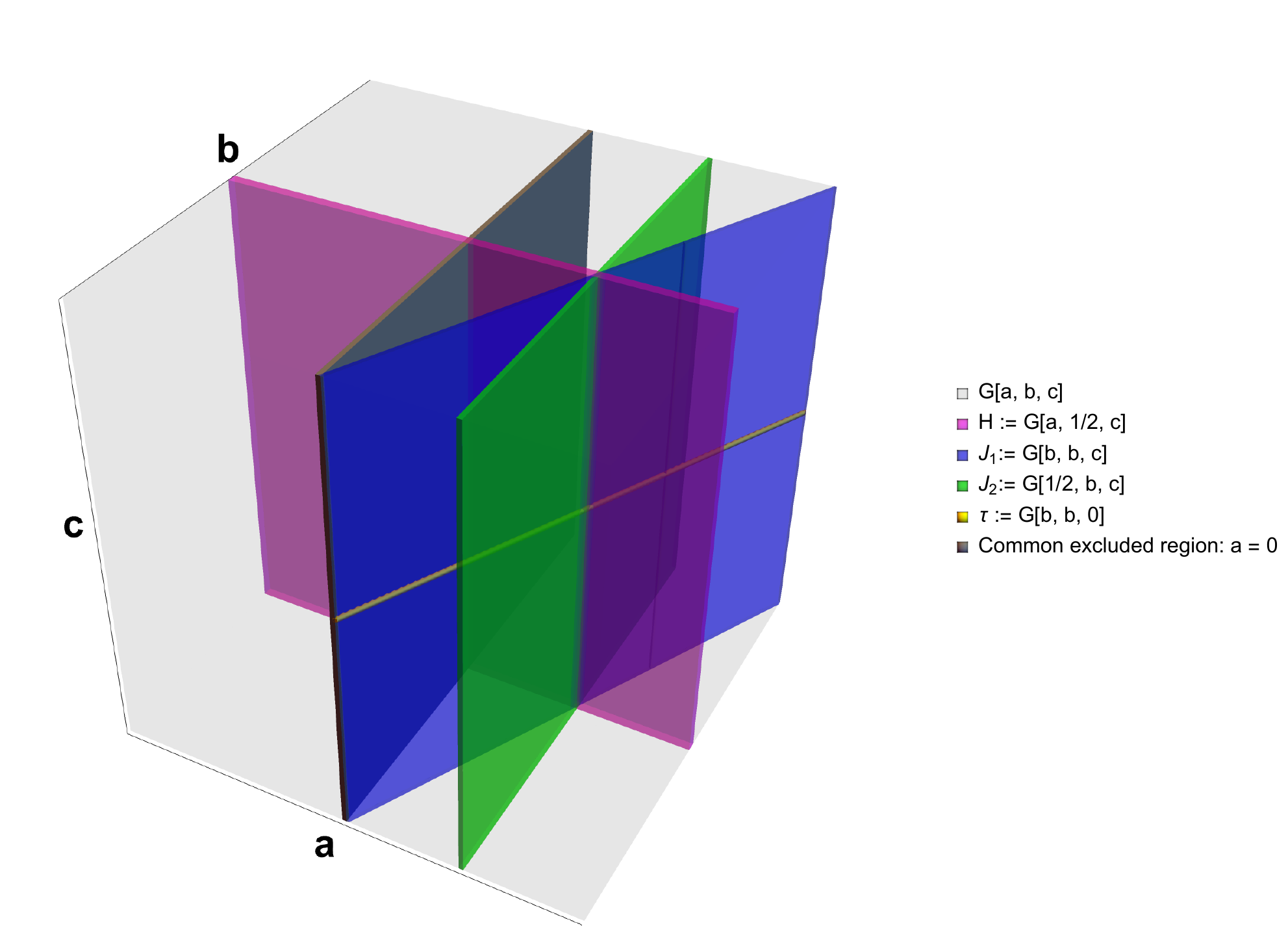} }}
    \caption{Various groups acting on the space of conjectures $\mathcal{C}$. The group $\mathcal{G}\subseteq \text{GL}(2,\mathbb{R})$ is a three parameter group with the parameters $(a,b,c)\in (\mathbb{R}\setminus\{0\})\times \mathbb{R}^+ \times \mathbb{R}$. The subgroups of $\mathcal{H},~\mathcal{T},~\mathcal{J}_1,~\mathcal{J}_2\subset\mathcal{G}$ are depicted in different colours. Some relationships between the various groups are obvious from the figure. For instance, the subgroup of dilations satisfies $\mathcal{T}\subseteq \mathcal{J}_1$ and $\mathcal{T} \cap \mathcal{J}_2 = \{I\}$, which can be deciphered from the plot. Some other relations cannot be inferred from the plot: for instance, $\mathcal{G} = \mathcal{T} \rtimes \mathcal{H}$, and $\mathcal{J}_1 \cap \mathcal{J}_2 = \left \{ \tfrac{1}{2}\left( \begin{smallmatrix} 1 & -1 \\ 1 & 1 \end{smallmatrix} \right) \left( \begin{smallmatrix} 1 & c \\ 0 & 1 \end{smallmatrix} \right)   \left( \begin{smallmatrix} 1 & 1 \\ -1 & 1 \end{smallmatrix} \right): c \in \mathbb{R} \right \}$. Further note that only $\mathcal{T}$, $\mathcal{J}_1$, and $\mathcal{J}_2$ in this collection act freely on the entire space $\mathcal{C}$.}%
    \label{fig:groups}%
\end{figure}

\subsection{Lie group structure.}
\label{ssec:lie-groups}

All the groups $\mathcal{T}, \mathcal{H}, \mathcal{G}, \mathcal{J}_1$, and $\mathcal{J}_2$ presented above turn out to be Lie groups. We prove this short result below, using the closed subgroup theorem \cite[Corollary~20.13]{lee2012smooth}:
\begin{lemma}[Lie group]
\label{lem:lie-groups}
Let $\mathcal{A}$ be any of the subgroups $\mathcal{T}, \mathcal{H}, \mathcal{G}, \mathcal{J}_1$, or $\mathcal{J}_2$. Then $\mathcal{A}$ is closed in $\text{GL}(2,\mathbb{R})$, $\mathcal{A}$ is an embedded submanifold of $\text{GL}(2,\mathbb{R})$, and $\mathcal{A}$ is an embedded Lie subgroup of $\text{GL}(2,\mathbb{R})$.
\end{lemma}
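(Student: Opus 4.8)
The plan is to invoke the closed subgroup theorem (Cartan's theorem, \cite[Corollary~20.13]{lee2012smooth}): since $\text{GL}(2,\mathbb{R})$ is a Lie group, any subgroup that is \emph{topologically closed} in $\text{GL}(2,\mathbb{R})$ is automatically an embedded submanifold and an embedded Lie subgroup. Thus all three assertions collapse to a single task, namely to prove that each of $\mathcal{T}, \mathcal{H}, \mathcal{G}, \mathcal{J}_1, \mathcal{J}_2$ is closed as a subset of $\text{GL}(2,\mathbb{R})$. I would carry this out by exhibiting each group as the common zero set of finitely many continuous functions of the matrix entries, intersected with $\text{GL}(2,\mathbb{R})$.

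First I would set up coordinate functions. Inverting the expansion $A = \left(\begin{smallmatrix} a+b-c & a-b+c \\ a-b-c & a+b+c\end{smallmatrix}\right)$ from the proof of Corollary~\ref{cor:group-eqiv}, the parameters are recovered as linear, hence continuous, functions of the entries: $a = \tfrac12(A_{11}+A_{12})$, $c = \tfrac12(A_{22}-A_{11})$, and $b = \tfrac14(A_{11}-A_{12}-A_{21}+A_{22})$, subject to the single linear constraint $A_{11}+A_{12}=A_{21}+A_{22}$ that characterizes when $A$ has the required form. The similarity $A = 2\left(\begin{smallmatrix}1&-1\\1&1\end{smallmatrix}\right)\left(\begin{smallmatrix}a&c\\0&b\end{smallmatrix}\right)\left(\begin{smallmatrix}1&-1\\1&1\end{smallmatrix}\right)^{-1}$ gives $\det(A)=4ab$, a fact I will use repeatedly. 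The easy cases are $\mathcal{H}$ and $\mathcal{T}$. For $\mathcal{H}$, membership is equivalent to the two closed linear equalities $A_{11}-A_{21}=1$ and $A_{22}-A_{12}=1$, while the defining nondegeneracy $p+q\neq1$ is precisely $\det(A)\neq0$ (using $1+\det(A)=\text{tr}(A)$ from Example~\ref{ex:group2}), which holds automatically on $\text{GL}(2,\mathbb{R})$; hence $\mathcal{H}=\text{GL}(2,\mathbb{R})\cap\{A_{11}-A_{21}=1,\,A_{22}-A_{12}=1\}$ is closed. For $\mathcal{T}$, membership is the closed condition $A_{12}=A_{21}=0$, $A_{11}=A_{22}$, together with $A_{11}>0$.

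The heart of the argument, and the one subtlety I would flag as the main obstacle, is $\mathcal{G}$, whose definition carries the \emph{open} conditions $b>0$ and $a\neq0$, so that $\mathcal{G}$ is emphatically not closed as a subset of the full matrix space. The point is that intersecting with $\text{GL}(2,\mathbb{R})$ repairs this. Concretely, I would take a sequence $A_n\in\mathcal{G}$ converging to some $A\in\text{GL}(2,\mathbb{R})$; by continuity of the coordinate functions the constraint $A_{11}+A_{12}=A_{21}+A_{22}$ passes to the limit, and $a_n\to a$, $b_n\to b$, $c_n\to c$. From $b_n>0$ one gets only $b\geq0$, and from $a_n\neq0$ nothing survives in the limit; but since $A$ is invertible, $\det(A)=4ab\neq0$ forces $a\neq0$ and $b\neq0$, and combined with $b\geq0$ this upgrades to $b>0$. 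Hence $A\in\mathcal{G}$, proving $\mathcal{G}$ closed. The identical mechanism is what rescues $\mathcal{T}$: the half-open condition $A_{11}\geq0$ in the limit, together with $\det=A_{11}^2\neq0$, yields $A_{11}>0$.

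Finally, $\mathcal{J}_1$ and $\mathcal{J}_2$ come for free once $\mathcal{G}$ is known to be closed. In the $(a,b,c)$ coordinates, $\mathcal{J}_1=\{A\in\mathcal{G}: a=b\}$ and $\mathcal{J}_2=\{A\in\mathcal{G}: a=\tfrac12\}$; since $a$ and $b$ are continuous functions on $\text{GL}(2,\mathbb{R})$, each is the intersection of the closed set $\mathcal{G}$ with a further closed condition ($a-b=0$, respectively $a=\tfrac12$), hence closed. With all five groups shown closed, the closed subgroup theorem delivers simultaneously that each is closed, an embedded submanifold, and an embedded Lie subgroup of $\text{GL}(2,\mathbb{R})$. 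I do not expect genuine difficulty beyond the determinant observation; the remaining steps are routine verifications of linear identities.
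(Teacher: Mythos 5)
Your proposal is correct and takes essentially the same approach as the paper: both reduce all three claims to showing each subgroup is closed in $\text{GL}(2,\mathbb{R})$ and then invoke the closed subgroup theorem \cite[Corollary~20.13]{lee2012smooth}, with the same key mechanism in the hard cases --- in the limit the open conditions degrade to $b \geq 0$ (resp.\ $A_{11} \geq 0$ for $\mathcal{T}$) with $a \neq 0$ lost entirely, and invertibility, via $\det(A) = 4ab$, restores $a \neq 0$ and $b > 0$. Your only departures are cosmetic: you characterize membership by explicit continuous coordinate functions and the linear constraint $A_{11}+A_{12}=A_{21}+A_{22}$ rather than tracking sequences of parameters, and you deduce closedness of $\mathcal{J}_1$ and $\mathcal{J}_2$ from that of $\mathcal{G}$ (as $a=b$, resp.\ $a=\tfrac{1}{2}$, are closed conditions) instead of repeating the limit argument as the paper does.
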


\begin{proof}
We will only prove that $\mathcal{A}$ is closed in $\text{GL}(2,\mathbb{R})$, and then the closed subgroup theorem implies the lemma as $\text{GL}(2,\mathbb{R})$ is a Lie group. Choose a sequence $A_i \rightarrow A$ in $\text{GL}(2,\mathbb{R})$, where $A_i \in \mathcal{A}$ for all $i$, and $A \in \text{GL}(2,\mathbb{R})$. Let us assume $A := \left(\begin{smallmatrix} A_{11} & A_{12} \\ A_{21} & A_{22} \end{smallmatrix}\right)$, for cases (i) and (ii) below, while for cases (iii)-(v) we assume $A = \left(\begin{smallmatrix} 1 & -1 \\ 1 & 1 \end{smallmatrix}\right)\left(\begin{smallmatrix} A_{11} & A_{12} \\ A_{21} & A_{22} \end{smallmatrix}\right) \left(\begin{smallmatrix} 1 & 1 \\ -1 & 1 \end{smallmatrix}\right)$. We must show that $A \in \mathcal{A}$. We do it case by case:
\begin{enumerate}[(i)]
    \item Case $\mathcal{A} = \mathcal{T}$: Let $A_i = \lambda_i \left(\begin{smallmatrix} 1 & 0 \\ 0 & 1 \end{smallmatrix}\right)$, for $\lambda_i > 0$ for all $i$. Since $A_i \rightarrow A$, we can conclude $\lambda_i \rightarrow A_{11}=A_{22} \geq 0$, and $A_{21}=A_{12}=0$. Also $A \in \text{GL}(2,\mathbb{R})$, so this implies $A_{11} \neq 0$, or equivalently $A \in \mathcal{T}$.
    \item Case $\mathcal{A} = \mathcal{H}$: For every $i$, take $A_i := \left(\begin{smallmatrix} p_i & q_i - 1 \\ p_i - 1 & q_i \end{smallmatrix}\right)$, where $p_i, q_i \in \mathbb{R}$, and $p_i + q_i \neq 1$. Using $A_i \rightarrow A$, we can conclude that  $p_i \rightarrow A_{11} = A_{21}+1$, and $q_i \rightarrow A_{22} = A_{12}+1$. Since $A \in  \text{GL}(2,\mathbb{R})$, we now conclude that $\det(A) = A_{11} + A_{22} - 1 \neq 0$. This implies $A \in \mathcal{H}$.
    \item Case $\mathcal{A} = \mathcal{G}$: For every $i$, take $A_i := \left(\begin{smallmatrix} 1 & -1 \\ 1 & 1 \end{smallmatrix}\right)\left(\begin{smallmatrix} a_i & c_i \\ 0 & b_i \end{smallmatrix}\right) \left(\begin{smallmatrix} 1 & 1 \\ -1 & 1 \end{smallmatrix}\right)$, with $a_i \neq 0$, $b_i > 0$. Then $A_i \rightarrow A$ implies $a_i \rightarrow A_{11}$, $b_i \rightarrow A_{22} \geq 0$, $A_{21}=0$, and $c_i \rightarrow A_{12}$. Since $A \in \text{GL}(2,\mathbb{R})$ we further have $\det(A) = A_{11} A_{22} \neq 0$, and thus we conclude $A_{11}, A_{22} \neq 0$. We have thus proved $A \in \mathcal{G}$.
    \item Case $\mathcal{A} = \mathcal{J}_1$: We take $A_i := \left(\begin{smallmatrix} 1 & -1 \\ 1 & 1 \end{smallmatrix}\right)\left(\begin{smallmatrix} b_i & c_i \\ 0 & b_i \end{smallmatrix}\right) \left(\begin{smallmatrix} 1 & 1 \\ -1 & 1 \end{smallmatrix}\right)$, with $b_i > 0$, for every $i$. Since $A_i \rightarrow A$, we may conclude $b_i \rightarrow A_{11} = A_{22} \geq 0$, $A_{21}=0$, and $c_i \rightarrow A_{12}$. Again $A \in \text{GL}(2,\mathbb{R})$ implies $\det(A) \neq 0$, and so $A_{11} \neq 0$. This implies $A \in \mathcal{J}_1$.
    \item Case $\mathcal{A} = \mathcal{J}_2$: Take $A_i := \left(\begin{smallmatrix} 1 & -1 \\ 1 & 1 \end{smallmatrix}\right)\left(\begin{smallmatrix} 1/2 & c_i \\ 0 & b_i \end{smallmatrix}\right) \left(\begin{smallmatrix} 1 & 1 \\ -1 & 1 \end{smallmatrix}\right)$, with $b_i > 0$, for every $i$. Then using $A_i \rightarrow A$ we conclude that $A_{11}= 1/2$, $A_{21}=0$, $c_i \rightarrow A_{12}$, and $b_i \rightarrow A_{22} \geq 0$. Since $A \in \text{GL}(2,\mathbb{R})$, this gives $A_{22} \neq 0$, and so we conclude that $A \in \mathcal{J}_2$.
\end{enumerate}
\end{proof}

\section{Machine guided search for conjectures}
\label{sec:computation}
In summary, we show that a metric space structure can be endowed to the space of algebraic inequalities. We identified the largest possible group of linear transformations that preserve such inequalities. We identify this group as the semidirect product of two distinct groups acting on the conjecture space. We show that conjecture space admits a free group action by the subgroup of dilations ($\mathcal{T}$), as well as two other groups $\mathcal{J}_1$ and $\mathcal{J}_2$ identified in the previous section. As such one can work on the corresponding quotient space. We denote our choice of the freely acting group by $G_f$. This gives a substantial computational advantage. We now proceed to describe our algorithm. 

\begin{algorithm}
\caption{Oracle}\label{alg:example}
\begin{algorithmic}[1]
\State \textbf{Inputs:} \textit{Mathematical features} $\{x_i\}_{i=1}^N$, \textit{function class} $\mathcal{F}_{d_1,d_2}$, and \textit{hyperparameters}: tolerance (tol), batch-size ($b$), maximum epochs (emax), learning rate ($\eta$).
\State $\theta \gets$ \text{random real vector}. 
\State \textbf{Parameterise:} $c_\theta:=(f_\theta,g_\theta):=\mathcal{P}({\theta}); f_\theta,g_\theta \in \mathcal{F}(K^{{G}_f}[x])$. 
\For{$i \gets 1$ to emax}
    \For{$j \gets 1$ to $b$}
     \If{$\mathcal{L}(c_\theta)\le {\text{tol}}$}
    \State return $\theta$
  \EndIf
\State $\theta\gets\theta-\eta~\mathcal{M}^{-1}_\theta ~\nabla \mathcal{L}(\theta)$
\State $\omega(c_\theta)\gets\frac{2}{b}\sum_{i\in \text{rand}}\text{sgn}(f_\theta(x_i)-g_\theta(x_i))$
\State $\mathcal{L}(c_\theta)\gets(1-\omega(c_\theta)^2)^2$
\EndFor
\EndFor
\State \textbf{Output:} $c_\theta=\mathcal{P}(\theta); f_\theta<g_\theta$. 
\end{algorithmic}
\end{algorithm}

\noindent In the above, the function class $\mathcal{F}_{d_1,d_2}$ is defined as the vector space spanned by functions with arguments that are symmetric polynomials in $x$ of degrees atmost $d_1$. Further such functions involve homogeneous functions (and hence scale invariant) of degree atmost $d_2$. As an example such combinations could include $sine,ln,exp,\ldots$ in variables upto feature degree $d_1$. For example, a combination could be $\sim\theta_1~f(x_{1,1}+x_{1,2})+\theta_2~g(x_{1,1} x_{1,2})$, where $d_1=2$ and $d_2=1$ and the function class $\mathcal{F}_{d_1(=2),d_2}$ contains the functions $f$ and $g$, which could be monomials, for example. The parameterisation $\mathcal{P}(\theta)$ describes this. We can understand $\mathcal{P}(\theta)$ as a machine learning architecture which functionally represents the tuple $(f_\theta,g_\theta)$; its inputs are $x\in\mathcal{\widehat{D}}$, and parameters $\theta$, and outputs are $(f_\theta(x),g_\theta(x))$. The quantity $\mathcal{M}_\theta$ is a Riemannian metric over the parameter space~$\Theta$ of this machine architecture, and $K^{{G}_f}[x]$ is the ring of invariants under $G_f$. 

\noindent In this work, we consider linear combinations and instead make the function class more interesting drawing from domain knowledge. The loss $\mathcal{L}$ is an invariant of the underlying group $\mathcal{G}$, since the function sgn is. More general losses are possible and can lead to different outcomes and different optimisation techniques. 

\section{Case studies}\label{sec:case_studies}
We now proceed to describe some conjectures obtained by querying the \textit{oracle}. Note that, although we found it is relatively easy to generate conjectures, being in a low dimensional setting, we should proceed with caution, when it comes to the actual veracity of the conjectures presented here. In what follows, although limited  sanity checks were conducted, we expect a number of conjectures to be false. We expect to return to these in future work. In the case of our examples of conjectures about the prime counting function, we focus on a singular conjecture, which we could prove, resulting in a new theorem.  

\subsection{Number theory.}
Consider the prime counting function $\pi$, which acts on a positive integer to return the number of primes less than equal to the number. This choice is inspired by Hardy-Littlewood's second conjecture~\cite{hardy1923some} which states that 
\begin{align}\label{conj:HL}
\vspace{-5pt}
    \pi(a+b)\le\pi(a)+\pi(b), \; 0 < a,b \in \mathbb{Z}.
\vspace{-15pt}
\end{align}
 This conjecture is from a century ago. Although the conjecture itself might be proven to be false, we found it might be worth investigating \textit{nearby} conjectures which might turn out to be true and provably true at best.
In effect this dictates the choice of the function classes $f_\theta$ and $g_\theta$. We specialise to the simple situation where these classes are linear in $\theta\in\Theta$. Following the set up laid out in Section~\ref{sec:computation}, we write down some conjectures obtained by querying the oracle in Algorithm~\ref{alg:example}. Although a large number of conjectures were found, we choose to present a handful of relatively simple conjectures in the current paper in Table~\ref{tab:example} for sake of brevity. 

\noindent Based on private communications with Fellows of London Institute of Mathematical Sciences, the first conjecture was partially proven and the following proof was provided to us by Thomas Fink, Yang-Hui He and Ilya Shkredov. 
\begin{theorem}
If $x,y$ are positive integers, then $\pi(x y) \geq \pi(x) + \pi(y),~~\forall \;x,y \ge 17 $. In addition, the inequality holds for all $2 \leq x,y < 17$.
\end{theorem}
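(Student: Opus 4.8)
The plan is to treat the two assertions separately: attack the infinite range $x,y\ge 17$ with explicit Chebyshev-type bounds on $\pi$, and dispose of the finite range $2\le x,y<17$ by direct computation. The key analytic inputs I would invoke are the Rosser--Schoenfeld estimates: the lower bound $\pi(n) > n/\ln n$, valid for all real $n\ge 17$, together with an upper bound $\pi(n) < C\,n/\ln n$ with $C=1.25506$. Since $x,y\ge 17$ forces $xy\ge 289$, every argument to which these bounds are applied is comfortably in the admissible range.

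For the infinite range, assume without loss of generality that $17\le x\le y$. Applying the lower bound to $\pi(xy)$ and the upper bounds to $\pi(x)$ and $\pi(y)$, the claim reduces to the purely real-variable inequality
$$\frac{xy}{\ln x+\ln y}\;\ge\;C\left(\frac{x}{\ln x}+\frac{y}{\ln y}\right),$$
because it then yields $\pi(xy) > \frac{xy}{\ln(xy)} \ge C\bigl(x/\ln x + y/\ln y\bigr) > \pi(x)+\pi(y)$. To establish the reduced inequality I would split the left-hand side into two equal halves and match each against one summand on the right. For the $y$-controlled half, the bound $\ln x+\ln y\le 2\ln y$ gives $\tfrac12\cdot\frac{xy}{\ln x+\ln y}\ge \frac{xy}{4\ln y}=\frac{x}{4}\cdot\frac{y}{\ln y}\ge \frac{17}{4}\cdot\frac{y}{\ln y} > C\,\frac{y}{\ln y}$, using $17/4>C$. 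For the $x$-controlled half, the same denominator bound reduces matters to $y\ln x\ge 4C\ln y$; since $\ln x\ge \ln 17 > 2.8$ and $y\ge 17$, the function $y\mapsto (\ln 17)\,y - 4C\ln y$ is increasing and positive on $[17,\infty)$, which closes the case. Adding the two halves proves the reduced inequality for all real $17\le x\le y$, hence for all integer pairs with $x,y\ge 17$.

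For the finite range $2\le x,y<17$ the explicit bounds are unavailable (the lower bound fails below $17$) and too weak in any case, so here I would simply verify $\pi(xy)\ge\pi(x)+\pi(y)$ directly over the finitely many pairs $x,y\in\{2,\dots,16\}$, exploiting symmetry in $x$ and $y$ to halve the work. The inequality is tight only at the smallest values, for instance $\pi(4)=2=\pi(2)+\pi(2)$ and $\pi(6)=3=\pi(2)+\pi(3)$, and carries ample slack elsewhere.

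The hard part will be the constant bookkeeping in the infinite range: the mismatch between the lower-bound constant $1$ and the upper-bound constant $C=1.25506$ must be absorbed entirely by the factor $x\ge 17$, and one must confirm that the reduced inequality holds on the whole half-line rather than merely asymptotically. The delicate point is the $x$-controlled half near the boundary $x,y\approx 17$, where the margin is smallest; the monotonicity argument for $y\mapsto(\ln 17)\,y-4C\ln y$ is precisely what guarantees the estimate does not degrade as $y$ grows. One subtlety worth flagging is that the theorem deliberately omits the mixed regime (one variable below $17$, the other at least $17$), so no claim there needs to be proved.
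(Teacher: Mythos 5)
Your proposal is correct and takes essentially the same route as the paper: both use the Rosser--Schoenfeld sandwich $\frac{n}{\log n} < \pi(n) < C\,\frac{n}{\log n}$ (valid for $n \ge 17$) to reduce the case $x,y \ge 17$ to an elementary real-variable inequality, and both settle $2 \le x,y < 17$ by finite enumeration. The only divergence is routine bookkeeping --- the paper argues symmetrically via a common denominator and the inequality $\log x + \log y \le \log x \log y$, whereas you order $x \le y$, halve the target, and invoke monotonicity of $y \mapsto (\log 17)\,y - 4C\log y$ --- and your version of the estimates checks out.
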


\begin{proof}
By the Rosser-Schoenfeld formula \cite[Corollary~1]{rosser1962approximate}, we have for any $\alpha \geq 1.25506$
\begin{equation}
    \frac{x}{\log x} < \pi(x) <  \frac{\alpha x}{\log x}, \;\; \forall \; x \geq 17.
\end{equation}
Also note that for all $x,y \geq 17$ we have: (i) $\log x < \frac{x}{2 \alpha}$, and (ii) $\log x + \log y \leq \log x \log y$. Now choose $\alpha = 1.26$. Then for all $x,y \geq 17$, we have using the above facts
\begin{equation}
\begin{split}
    \pi(x) + \pi (y) & < \alpha \left( \frac{x}{\log x} + \frac{y}{\log y} \right) = \alpha \left( \frac{x \log y + y \log x}{\log x \log y} \right) < \left( \frac{\frac{xy}{2} + \frac{xy}{2}}{\log x \log y} \right) \\
    &= \left( \frac{xy}{\log x \log y} \right) \leq \left( \frac{xy}{\log x + \log y} \right) = \frac{xy}{\log (xy)} < \pi(xy).
\end{split}
\end{equation}
The case $2 \leq x,y < 17$ has been enumeratively checked using a computer.
\end{proof}
\newpage
{
\begin{table}[h]\label{prime_counting}
\centering
\begin{tabular}{|c|c|}
\hline
\textbf{\#} & \text{Conjecture ($c_\theta$)}  \\
\hline
\hline
1 & $\pi(a b) \ge \pi(a) + \pi(b)$  \\
\hline
2 & $\pi (a b)+\pi (a+b)+\pi (a)+\pi (b)\leq 2 a b$\\
\hline
 3 & $4 (\pi (a)+\pi (b))+\pi (a+b)\le 4 \pi (a b)$\\
\hline
 4 & $\pi (a b)+2 \pi (a+b)\geq \pi (a)+\pi (b)$\\
\hline
 5 & $\pi(a+ b) \le \pi(a) + \pi(b)$ \\
\hline
 6 & $\pi(a\ b\ c)\geq\pi(a)~\pi(b)~\pi(c)$ \\
 \hline
 7$^\dagger$ & $\pi(a\ b\ c\ldots)\geq\pi(a)~\pi(b)~\pi(c)\ldots$\\
\hline
 8 & $\pi(a+b+c)\leq \pi(a)+\pi(b)+\pi(c)$ \\
 \hline
  9$^\dagger$ & $\pi(a+b+c+\ldots)\leq \pi(a)+\pi(b)+\pi(c)+\ldots$ \\
\hline
10 & $\pi(a) \pi(b) \pi(c)\leq \sqrt{(\pi(a)+\pi(b)+\pi(c))^2+(\pi(a b c))^2}$\\
\hline
11$^{\dagger\star}$ & 
   ${(~(\Sigma_i\pi(a_i))^2+(\pi(\Pi_i a_i))^2)^{1/3}}\leq\Pi_i\pi( a_i)\leq{(~(\Sigma_i\pi(a_i))^2+(\pi(\Pi_i a_i))^2~)^{1/2}}$
   \\
\hline
12&$\pi (a b+ b c+ c a)^3\geq 2 \pi (a b c)^2+\pi (a b c)+\pi (a+b+c)^2$\\
\hline
13&$\pi (a+b+c)^2+\pi (a b c)\geq \pi (a b+ b c+ c a)^3+2 \pi (a b c)^2$\\
\hline
14 & $\pi (a b +b c + c a)^7\geq \pi (a+b+c)^7$\\
\hline
15 & $\pi \left(\alpha _1 \alpha _2 \alpha _3\right){}^2+ \pi \left(\alpha _1 \alpha _2+\alpha _2 \alpha _3+\alpha _3\alpha_1\right)\geq
    \pi \left(\alpha _1 \alpha _2+\alpha _2 \alpha _3+\alpha _3\alpha_1\right){}^3+ \pi \left(\alpha _1+\alpha _2+\alpha _3\right)$\\
    \hline
16&
    \tiny{
    $
    \pi \left(\alpha _1 \alpha _2 \alpha _3\right)^3+4 \pi \left(\alpha _1 \alpha _2 \alpha _3\right){}^2+4 \pi \left(\alpha _1+\alpha _2+\alpha _3\right)^3+3 
    +ldots
    \geq
     \pi \left(\alpha _1 \alpha _2+\ldots \right)$}\\
    \hline
17$^\circ$ & $\pi \left(\chi _2\right){}^3-\pi \left(\chi _3\right){}^2+\pi \left(\chi _3\right)\geq \pi \left(\chi _1\right)$\\
\hline
18&$\pi \left(\chi _1\right){}^3\geq \pi \left(\chi _2\right){}^3$\\
\hline
19&$5 \pi \left(\chi _2\right){}^3\geq  \pi \left(\chi _1\right)$\\
\hline
20&
$\pi \left(\chi _1\right)+\pi \left(\chi _3\right)<\pi \left(\chi _2\right)$\\
\hline
21 & $5 \pi \left(\pi \left(\chi _1\right)\right)+2 \pi \left(\pi \left(\chi _3\right)\right)\geq \pi \left(\pi \left(\chi _2\right)\right)$\\
\hline
22&$\pi \left(\pi \left(\chi_2\right)\right)\geq 2 \pi \left(\pi \left(\chi_3\right)\right)+3 \pi \left(\pi \left(\chi_1\right)\right)$\\
\hline
22 & $11 \left(\pi (a b)+\frac{a b}{\log (a b)}\right)>9 \pi (a+b)+\frac{9 (a+b)}{\log (a+b)}$\\
\hline
23 & $\pi \left(x+\sqrt{x}\right)\leq 3 \pi \left(x\right)+1$\\
\hline
24 $^{\dagger\dagger}$ & $\pi \left(x\right){}^2 > x^3+2x+2$ \\
\hline
25 & $\pi \left(x+\sqrt{x}\right) < 3\pi(x)$\\
\hline
 26 & $\pi \left(x+\sqrt{x}\right)<\frac{12}{5} \pi (x)+1$\\
\hline
\hline
\end{tabular}
\caption{\footnotesize{Table of conjectures for the prime counting function $\pi$. Conjectures derived from machine generated conjectures are made distinct with the marker $^{\dagger}$. In $^\star$, $a_i$ is to be read as $a_i+o(1)$. In $^\circ$, $\chi_i$ is the $i^\text{th}$ symmetric polynomial of $\mathbb{S}_3$. More precisely, $\chi=\{a+b+c,~ab+bc+ca,~abc\}$. Note that the conjecture $^{\dagger\dagger}$ can be interpreted as denoting the exterior points of an elliptic curve with coordinates $(x,\pi(x))$.}}
\label{tab:example}
\end{table}}
In addition to the proof of the theorem, we note the following subcases: (1) conjecture 1 holds asymptotically, (2) conjecture 1 holds when $a=b$. 
The proof of the theorem required knowledge of the Rosser-Schoenfeld formula \cite{rosser1962approximate}, however, subcases of the problem could be resolved using the asymptotic behaviour of the prime counting function alone, namely $\pi(x)\sim \frac{x}{\log x}$. We have not conducted such an exercise for all the conjectures listed. For conjecture 1 an analogous result proven for $a, b \ge \sqrt{53}$ in \cite{mincu2008properties}. Conjectures 2 and 7 can be seen to be true in the asymptotic limit. Conjecture 5 is HL's second conjecture~\cite{hardy1923some};~\eqref{conj:HL} while conjectures 8 and 9 are analogous to the HL conjecture. Conjectures 18 and 19 are trivially true.
\subsection{Finite simple groups.}
The completion of the classification of finite simple groups was an important advancement in mathematics aided by computers. 
In recent times, finite simple groups have been studied through the lens of machine learning~\cite{he2022learning,He_sharnoff_Jejjala_mishra}\footnote{A new theorem in this setting is expected to be reported in~\cite{He_sharnoff_Jejjala_mishra}.}.
In this subsection, we query the oracle for conjectures involving the generators of non-abelian finite simple groups and diameters of corresponding Cayley graphs with respect to a symmetric set of such generators. 
For simplicity, we will study non-abelian simple subgroups of $\mathcal{H}\subseteq\mathbb{S}_n$. As such we do not consider sporadic groups. 
All conjectures relate to this class of groups. Keeping in mind all finite simple groups are \textit{2-generated}, we consider the following properties of the group generators and associated Cayley graphs;
\begin{align}
    \mathcal{H}:=&\langle\sigma, \tau\rangle,~~~S=\{\sigma,\tau\}, ~~~\mathcal{D}:=\text{diam}(\text{Cay}(\mathcal{H},S\cup S^{-1})).\\
    \tau_1:=&tr(\sigma),~~~\tau_2:=tr(\tau),~~~~ \mathcal{O}_1:=o(\sigma),~\mathcal{O}_2:=o(\sigma).
\end{align}
One conjecture of interest is Babai's conjecture~\cite{babai1992diameter}, which upper bounds the diameter of the Cayley graph by a function of the order of the group, that is,
\begin{align}\label{babai}
    \mathcal{D}~\le~(\text{log}_2 |\mathcal{H}|)^c~,~\text{for some universal constant c}.
\end{align}
We search for \textit{nearby} conjectures, present some randomly selected  conjectures involving these quantities in Table~\ref{simple_groups_table}. 
\begin{table}[H]\label{simple_groups_table}
\centering
\begin{tabular}{|c|c|}
\hline
\textbf{\#} & \text{Conjecture ($c_\theta$)} \\
\hline
\hline
1 & $\mathcal{D}\ge(\tau _1+\tau _2)/2$ \\
\hline
2 & $\mathcal{D}\le(\mathcal{O}_1+\mathcal{O}_2)$\\
\hline
3 & $\mathcal{D}\ge (\mathcal{O}_1+\mathcal{O}_2)/4+{5 (\tau _1+\tau _2)}/{12}- 0.120225~\text{log}_2(G)$\\
\hline
4 & $\mathcal{D}\ge (4.32809 \log (G)+4\left(o_1+o_2\right)+5\left(\tau _1+\tau _2\right))/3$\\
\hline
\hline
\end{tabular}
\caption{Table of conjectures for finite simple groups.}
\end{table}
We bear in mind here that the data used to generate such conjectures correspond to a samples with $n\le9$. No further sanity checks were conducted and as such, these should be taken with a grain of salt. We also note that conjectures 3 and 4 in Table~\ref{simple_groups_table} aren't particularly aesthetically pleasing. However, we present this to highlight that such conjectures could in principle be churned out by the oracle. 
\section{Discussion and outlook}\label{sec:discussion}
Our work highlights the potential for AI-driven conjecture generation. The initial findings in this work indicate that it is possible to generate nontrivial algebraic conjectures in mathematics, which often have relationships to existing long-standing conjectures, such as those given in the case studies inspired by the Hardy-Littlewood conjectures (involving the prime-counting function), and Babai’s conjecture (on diameters of Cayley graphs of non-abelian simple groups). We expect to study properties of Hermitian matrices such as Wigner matrices and local Hamiltonians (with applications to machine learning and quantum theory) in the future.  

\noindent One wishful application of such AI-driven inequality-discovery is to find elusive algebraic relations between computationally hard-to-compute quantities, $f$ (such as minimum eigenvalue of some matrix), and computationally efficient-to-compute quantities, $g_1,g_2$ (such as Trace of functions of the matrix). Being able to closely bound such hard quantities from above and below, $g_1 < f < g_2$, would then lead to discovering new algorithms to efficiently approximate a computationally hard quantity. 

\noindent Another open direction is to adapt this framework for mathematics education; wherein the \emph{oracle} outputs non-trivial conjectures which can be used fruitfully to generate novel exercise problems and can be used as a pedagogical tool to offer new practice problems for mathematics education. 

\noindent These possibilities motivate a principled study of the space of classes of conjectures, understanding their structure, and geometrisability, which we undertook in this work. Giving structure to this space through symmetries, reduces the computational difficulty of the underlying optimisation problem, and opens up the space to probe using tools from geometric analysis such as Ricci Flow. 

\noindent Although we take modest theoretical strides in this endeavour, further work remains to understand invariants of free group actions on the conjecture space. Further, the search problems we have devised through case studies in Section~\ref{sec:case_studies} are typically low dimensional optimisation problems. Including nontrivial machine architectures (such as neural networks) to capture the latent space would result in formulating more difficult optimisation problems. As such, it would be beneficial to realise the \textit{oracle} in the proposed geometric setting in future implementations. This would require us to address some group and invariant theoretic questions we have raised in this work. The machine learning framework of Gaussian processes is a natural way to sample functions from distributions, with the added constraint $f<g$. With further understanding of the conjecture space, and the distribution of conjectures, a machine guided approach facilitated by Gaussian processes could be insightful. 

\noindent In this work, we have effectively exploited symbolic machine learning through our representations and parameterisations. When considering different representations, further work is required to establish interactions of our conjecture-generation pipeline with formal proof assistants which are built on higher-order logic. In light of substantial reasoning abilities of natural language, it seems worthwhile to explore natural language representations in the future.

\noindent Impactful conjectures have certain commonalities. They are nontrivial with potentially substantial evidence in favour of it. Typically their description is terse, although this is a function of the underlying mathematical representation. Such conjectures can also be a gateway to unlocking new theorems; exemplified by the Riemann Hypothesis. We envision utilising these insights accumulated in~\cite{D19} to make more meaningful conjectures. In light of the outcomes in this paper, the prospect of exploiting machine learning in exploring conjecture spaces in a geometric fashion, with an aim to generating impactful conjectures is tantalising. This would also benefit from a quantum backend. As such, a truly interdisciplinary approach which also involves domain experts from different branches of mathematics is very much at the backbone of this proposed pipeline. 
\section*{Acknowledgements}
We acknowledge helpful discussions with Ann Copestake, Kevin Buzzard, Carl Henrik Ek, Thomas Fink, Yang-Hui He, Fabian Huch, Ferenc Husz\'ar, Vishnu Jejjala, Minhyong Kim, Oleksandr Kosyak, Neil Lawrence, Alexander Ochirov, Ilya Shkredov, and Yiannis Stathopoulos. The preprint also benefited from helpful discussions at the workshops on \textit{AI-assisted mathematical discovery} hosted at the London Institute for Mathematical Sciences, and \textit{Machine Learning for Science: Mathematics at the Interface of Data-driven and Mechanistic Modelling} hosted at Mathematisches Forschungsinstitut Oberwolfach. 

\bibliographystyle{alpha}
\bibliography{bibliography}

\newcommand{\etalchar}[1]{$^{#1}$}
\begin{thebibliography}{MMZ{\etalchar{+}}23}

\bibitem[BB65]{BB65}
Edwin~F. Beckenbach and Richard Bellman.
\newblock {\em Inequalities}.
\newblock Springer-Verlag, 1965.

\bibitem[BS92]{babai1992diameter}
L{\'a}szl{\'o} Babai and {\'A}kos Seress.
\newblock On the diameter of permutation groups.
\newblock {\em European journal of combinatorics}, 13(4):231--243, 1992.

\bibitem[BSD63]{BS63}
B.J. Birch and H.P.F. Swinnerton-Dyer.
\newblock Notes on elliptic curves. i.
\newblock {\em Journal für die reine und angewandte Mathematik},
  1963(212):7--25, 1963.

\bibitem[BSD65]{BS65}
B.J. Birch and H.P.F. Swinnerton-Dyer.
\newblock Notes on elliptic curves. ii.
\newblock {\em Journal für die reine und angewandte Mathematik},
  1965(218):79--108, 1965.

\bibitem[Dij19]{D19}
Robbert Dijkgraaf.
\newblock The subtle art of the mathematical conjecture.
\newblock {\em Quanta Magazine}, 5 2019.

\bibitem[DVB{\etalchar{+}}21]{Dothers21}
Alex Davies, Petar Veli{\v{c}}kovi{\'c}, Lars Buesing, Sam Blackwell, Daniel
  Zheng, Nenad Toma{\v{s}}ev, Richard Tanburn, Peter Battaglia, Charles
  Blundell, Andr{\'a}s Juh{\'a}sz, et~al.
\newblock Advancing mathematics by guiding human intuition with ai.
\newblock {\em Nature}, 600(7887):70--74, 2021.

\bibitem[FBH{\etalchar{+}}22]{Fother22}
Alhussein Fawzi, Matej Balog, Aja Huang, Thomas Hubert, Bernardino
  Romera-Paredes, Mohammadamin Barekatain, Alexander Novikov, Francisco~J
  R~Ruiz, Julian Schrittwieser, Grzegorz Swirszcz, et~al.
\newblock Discovering faster matrix multiplication algorithms with
  reinforcement learning.
\newblock {\em Nature}, 610(7930):47--53, 2022.

\bibitem[Guy04]{Guy04}
Richard~K. Guy.
\newblock {\em Unsolved Problems in Number Theory}.
\newblock Problem Books in Mathematics. Springer, New York, NY, 3 edition,
  2004.

\bibitem[HJSM23]{He_sharnoff_Jejjala_mishra}
Yang-Hui He, Vishnu Jejjala, Max Sharnoff, and Challenger Mishra.
\newblock Machine learning finite simple groups.
\newblock {\em work in progress}, 2023.

\bibitem[HK22]{he2022learning}
Yang-Hui He and Minhyong Kim.
\newblock Learning algebraic structures: preliminary investigations.
\newblock {\em International Journal of Data Science in the Mathematical
  Sciences}, pages 1--20, 2022.

\bibitem[HL23]{hardy1923some}
Godfrey~H Hardy and John~E Littlewood.
\newblock Some problems of 'partitio numerorum'; iii: On the expression of a
  number as a sum of primes.
\newblock {\em Acta Mathematica}, 44(1):1--70, 1923.

\bibitem[HLP52]{HLP52}
Godfrey~H. Hardy, John~E. Littlewood, and George P{\'o}lya.
\newblock {\em Inequalities}.
\newblock Cambridge university press, 1952.

\bibitem[HN87]{H87}
Robert Hecht-Nielsen.
\newblock Kolmogorov’s mapping neural network existence theorem.
\newblock In {\em Proceedings of the international conference on Neural
  Networks}, volume~3, pages 11--14. IEEE Press New York, NY, USA, 1987.

\bibitem[Ins00]{CMI00}
Clay~Mathematics Institute.
\newblock Millennium problems, 2000.

\bibitem[Lan12]{lang2012differential}
Serge Lang.
\newblock {\em Differential and Riemannian manifolds}, volume 160.
\newblock Springer Science \& Business Media, 2012.

\bibitem[Lee12]{lee2012smooth}
John~M Lee.
\newblock {\em Smooth manifolds}.
\newblock Springer, 2012.

\bibitem[MMZ{\etalchar{+}}23]{Mothers23}
Daniel~J. Mankowitz, Andrea Michi, Anton Zhernov, Marco Gelmi, Marco Selvi,
  Cosmin Paduraru, Edouard Leurent, Shariq Iqbal, Jean-Baptiste Lespiau, Alex
  Ahern, Thomas K{\"o}ppe, Kevin Millikin, Stephen Gaffney, Sophie Elster,
  Jackson Broshear, Chris Gamble, Kieran Milan, Robert Tung, Minjae Hwang,
  Taylan Cemgil, Mohammadamin Barekatain, Yujia Li, Amol Mandhane, Thomas
  Hubert, Julian Schrittwieser, Demis Hassabis, Pushmeet Kohli, Martin
  Riedmiller, Oriol Vinyals, and David Silver.
\newblock Faster sorting algorithms discovered using deep reinforcement
  learning.
\newblock {\em Nature}, 618(7964):257--263, 2023.

\bibitem[MPA08]{mincu2008properties}
Gabriel Mincu, LAUREN~TIU PANAITOPOL, and STR ACADEMIEI.
\newblock Properties of some functions connected to prime numbers.
\newblock {\em JIPAM. Journal of Inequalities in Pure \& Applied Mathematics
  [electronic only]}, 9(1):Paper--No, 2008.

\bibitem[MV70]{MV70}
Dragoslav~S. Mitrinovic and Petar~M. Vasic.
\newblock {\em Analytic inequalities}.
\newblock Springer, 1970.

\bibitem[RGM{\etalchar{+}}21]{Rothers21}
Gal Raayoni, Shahar Gottlieb, Yahel Manor, George Pisha, Yoav Harris, Uri
  Mendlovic, Doron Haviv, Yaron Hadad, and Ido Kaminer.
\newblock Generating conjectures on fundamental constants with the {R}amanujan
  {M}achine.
\newblock {\em Nature}, 590(7844):67--73, 2021.

\bibitem[RS62]{rosser1962approximate}
J~Barkley Rosser and Lowell Schoenfeld.
\newblock Approximate formulas for some functions of prime numbers.
\newblock {\em Illinois Journal of Mathematics}, 6(1):64--94, 1962.

\bibitem[Thu82]{T82}
William~P Thurston.
\newblock Three dimensional manifolds, kleinian groups and hyperbolic geometry.
\newblock {\em Bulletin (New Series) of the American Mathematical Society},
  6(3):357--381, 1982.

\bibitem[Wei49]{W49}
André Weil.
\newblock Numbers of solutions of equations in finite fields.
\newblock {\em Bulletin of the American Mathematical Society}, 55(5):497--508,
  1949.

\end{thebibliography}
\end{document}